\documentclass[12pt,onecolumn,draftcls]{IEEEtran}
\def\comment#1{}
\usepackage{graphicx,graphics,epstopdf,subfig}
\usepackage{amsmath,amssymb}
\usepackage{algorithm,algorithmic}
\usepackage{enumerate}
\usepackage{xspace}
\usepackage{color}
\usepackage{multirow}
\usepackage{hyperref}
\usepackage[noadjust]{cite}
\newcommand{\RN}[1]{%
	\textup{\uppercase\expandafter{\romannumeral#1}}%
}

\newcommand{\Amat}{{\bf A}}

\newcommand{\Imat}{{\bf I}}

\newcommand{\Rmat}[0]{{{\bf R}}}
\newcommand{\Smat}[0]{{{\bf S}}}
\newcommand{\Tmat}[0]{{{\bf T}}}

\newcommand{\Wmat}[0]{{{\bf W}}}

\newcommand{\nv}{\boldsymbol{n}}

\newcommand{\xv}{\boldsymbol{x}}
\newcommand{\yv}{\boldsymbol{y}}

\newcommand{\zv}{\boldsymbol{z}}

\newcommand{\alphav}{\boldsymbol{\alpha}}
\newcommand{\betav}{{\boldsymbol{\beta}} }

\newcommand{\zetav}{{\boldsymbol{\zeta}} }

\newcommand{\thetav}{\boldsymbol{\theta}}

\newcommand{\lambdav}[0]{{\boldsymbol{\lambda}} }

\newcommand{\ts}{^{\top}}

\newtheorem{definition}{Definition}
\newtheorem{lemma}{Lemma}
\newtheorem{theorem}{Theorem}
\newtheorem{proof}{Proof}
\newtheorem{corollary}{Corollary}
\begin{document}
	\setlength{\parskip}{.02in}
\title{Lensless Compressive Imaging}

\author{
	\authorblockN{Xin Yuan, Hong Jiang, Gang Huang and Paul Wilford}
		\authorblockA{Bell Labs, Alcatel-Lucent, 600 Montain Avenue, Murray Hill, NJ, 07974, USA.\\
		 \{x.yuan, hong.jiang, gang.huang, paul.wilford\}@alcatel-lucent.com}
	}



\maketitle

\begin{abstract}
We develop a lensless compressive imaging architecture, which consists of an aperture assembly and a single sensor, without using any lens. 
An {\em anytime} algorithm is proposed to reconstruct images from the compressive measurements; the algorithm produces a sequence
of solutions that monotonically converge to
the true signal (thus, anytime).
The algorithm is developed based on the sparsity of local overlapping patches (in the transformation domain) and state-of-the-art results have been obtained.
Experiments on real data demonstrate that encouraging results are obtained by measuring about 10\% (of the image pixels) compressive measurements. 
The reconstruction results of the proposed algorithm  are compared with the JPEG compression (based on file sizes) and the reconstructed image quality is close to the JPEG compression, in particular at a high compression rate.
\end{abstract}

\begin{IEEEkeywords}
	Compressive sensing, Lensless compressive imaging, denoising, sparse representation, anytime.
\end{IEEEkeywords}


\maketitle

\section{Introduction}
Compressive sensing~\cite{Candes05compressed,donoho2006compressed} is an emerging technique to acquire and process digital data such as two-dimensional images~\cite{Duarte08SPC,Romberg08SPM,Yuan14TSP}, hyperspectal images~\cite{Wagadarikar09CASSI,Yuan15JSTSP,Tsai15OL}, polarization images~\cite{Tsai15OE} and videos~\cite{Patrick13OE,Yuan14CVPR,Yang14GMM,Yang14GMMonline,Yuan13ICIP,Li13TB,Jiang12Bell,Jiang12Inverse,HollowayICCP12,Veeraraghavan11TPAMI,Hitomi11ICCV,Reddy11CVPR}. Compressive sensing is most effective when it is used in data acquisition: to capture the data in the form of compressive measurements~\cite{Goyal08SPM}. 
Though the theory has been established over a decade ago~\cite{Candes05compressed,Candes05,donoho2006compressed}, practical applications are still attracting researchers and engineers. 
With compressive measurements, images may be reconstructed with far fewer measurements than the number of pixels in the original images. Therefore, by using compressive sensing in acquisition, images are compressed while they are captured, avoiding high speed processing, or transmission, of a large number of pixels.
The single pixel camera~\cite{Takhar06SPIE,Duarte08SPM} directly captures compressive measurements of an image, which is a camera architecture that employs a digital micromirror array to perform optical implementation of linear projections of an image onto pseudo-random binary patterns. It has the ability to obtain an image with a single detection element while sampling the image fewer times than the number of pixels. The same camera architecture is also used for Terahertz imaging~\cite{Chan08APL,Heidari09MMW}, and millimeter wave imaging~\cite{{Babacan11ICIP}} and X-ray imaging~\cite{Wang15SIAM_xray}. These cameras all make use of a lens to form an image in a plane before the image is projected onto a pseudo-random binary pattern. Lenses, however, severely constrain the geometric and radiometric mapping from the scene to the image~\cite{Zomet06CVPR}. Furthermore, lenses add size, cost and complexity to a camera, especially at the non-visible bandwidth.

In this paper, we present an in-depth description and mathematical analysis of a lensless compressive imaging architecture that was originally proposed in~\cite{Huang13ICIP}. We have built a new version of lensless camera with new hardware under the same architecture, which consists of two components, an aperture assembly and a single sensor. No lens is used. The aperture assembly consists of a two dimensional array of aperture elements. The transmittance of each aperture element is independently controllable. The sensor is a single detection element, such as a single photo-conductive cell. Each aperture element together with the sensor defines a cone of a bundle of rays, and the cones of the aperture assembly define the pixels of an image. The sensor is used for taking compressive measurements. Each measurement is the integration of rays in the cones modulated by the transmittance of the aperture elements.
The proposed architecture is different from the cameras of~\cite{Takhar06SPIE} and ~\cite{Zomet06CVPR}. The fundamental difference is how the image is formed. In both~\cite{Takhar06SPIE} and~\cite{Zomet06CVPR}, an image of the scene is formed on a plane, by some physical mechanism such a lens or a pinhole, before it is digitally captured (by compressive measurements in~\cite{Takhar06SPIE}, and by pixels in ~\cite{Zomet06CVPR}). In the proposed architecture of this work, no image is physically formed before the image is captured. 
The proposed architecture is also related to, the traditional coded aperture imaging~\cite{Caroli87SCR,Zand13CodeAperture}. Specifically, if the sensor number is on the order of the aperture element figures, the proposed architecture will be the coded aperture.
However, only a single detector is used in our system. 
The proposed architecture is distinctive with the following features.
\begin{itemize}
	\item 
	No lens is used. An imaging device using the proposed architecture can be built with reduced size, weight, cost and complexity. In fact, our architecture does not rely on any physical mechanism to form an image before it is digitally captured.
	\item 
	No scene is out of focus. The sharpness and resolution of images from the proposed architecture are basically limited by the resolution of the aperture assembly (number of aperture elements), there is no blurring introduced by lens for scenes that are out of focus.\footnote{When the scene is far from the camera, each pixel covers a large area and the image will look blurry. However, this is not introduced by the lens as in a conventional camera.}
	\item 
	The same architecture can be used for imaging of visible spectrum, and other spectra such as infrared and millimeter waves.
	\item 
	When multiple sensors are used in our system, they can be placed in arbitrary position and the scene is still in focus for each sensor. Therefore, it readily forms a multi-view system~\cite{Jiang14APSIPA}. 
	\item The proposed architecture can be used to capture hyperspectral images~\cite{Yuan15JSTSP} and polarized images~\cite{Tsai15OE} by integrating related hardware. 
\end{itemize}
We built a prototype device for capturing images of visible spectrum. It consists of an LCD panel, and a photo-electric detector. 
Though the camera has been introduced in~\cite{Huang13ICIP,Jiang14APSIPA}, no algorithm has been developed particularly for this new camera and in this paper we present more details on both  hardware and algorithm development.

The proposed algorithm is an {\em anytime} algorithm~\cite{Liao14GAP}.
As defined in~\cite{Zaimag96}, ``anytime algorithms are algorithms whose quality of results improves gradually as computation time increases".
Specifically, the solution in each iteration of our algorithm {\em monotonically} converges to the ground truth. 
In addition, we compare our new algorithm with JPEG compression at (roughly) the same compression ratio (based on the size of JPEG files). More importantly, we demonstrate that the proposed algorithm achieves similar quality of JPEG at high compression ratio.

The rest of this paper is organized as follows:
The architecture is described in Section~\ref{Sec:Arc} and Section~\ref{Sec:Form} derives the mathematical formulation.
Our new algorithm is proposed in Section~\ref{Sec:Algo} and Section~\ref{Sec:sim} provides extensive simulation results.
Section~\ref{Sec:Hardware} presents the experimental hardware and real data results.
Section~\ref{Sec:Col} summarizes the entire paper.

\begin{figure}[hbtp]
	\centering
	\includegraphics[width=0.8\textwidth]{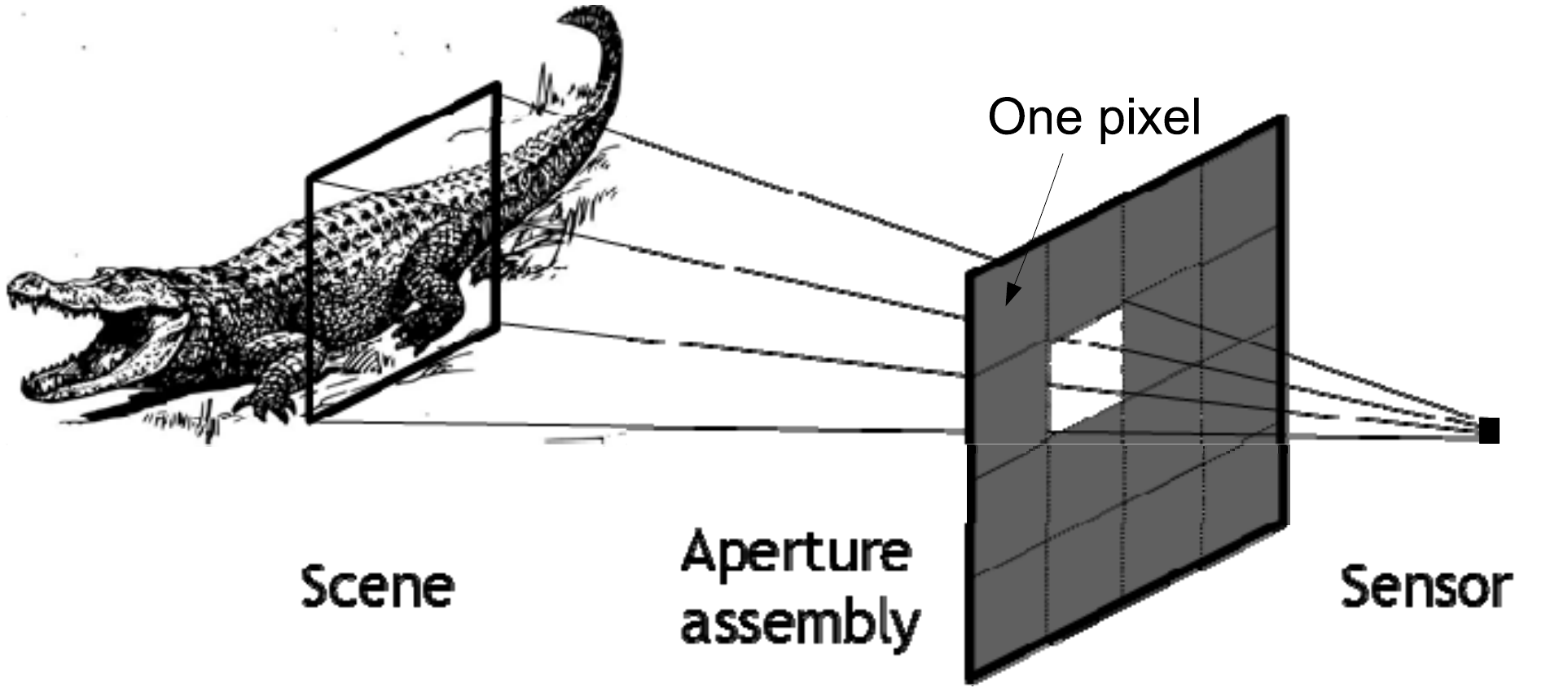}
	\caption{The proposed architecture. It consists of two components: an aperture assembly and an infinitesimal sensor of a single detection element.}
	\label{fig:arch}
\end{figure}

\section{Architecture}
\label{Sec:Arc}
Figure~\ref{fig:arch} depicts the proposed architecture, which consists of two principle components: an aperture assembly and a single sensor (a photodiode), which measures the light intensity to capture grayscale images, or a tri-color sensor which measures intensity of each RGB channel to capture color images. 
The aperture assembly is made up of a two dimensional array of aperture elements (blocks on the aperture assembly in Figure~\ref{fig:arch}); the transmittance of each element $\{P_{i,j}\}_{i,j =1}^{N_x, N_y}$ can be individually controlled with $(N_x, N_y)$ denoting the dimension of the aperture assembly. 

Each element of the aperture assembly, together with the sensor, defines a cone of a bundle of rays (Figure~\ref{fig:arch}), and the integration of the rays within a cone is defined as a pixel value of the image. Therefore, in the proposed architecture, an image is defined by the pixels which correspond to the array of aperture elements in the aperture assembly.
One possible way to measure an image with a single sensor is to capture the image pixel by pixel, which can be implemented by reading of the sensor when one of the aperture elements is completely open ($P_{i,j} = 1$) and all others are completely closed ($\{P_{i^{\prime},j^{\prime}}\}_{\forall i^{\prime}\neq i, j^{\prime} \neq j} = 0$). 
The measurements are the pixel values of the image when the elements of the aperture assembly are opened one by one in certain scan order. This corresponds to the traditional representation of a digital image pixel by pixel.

With compressive sensing, it is possible to represent an image by using fewer measurements than the number of pixels~\cite{Takhar06SPIE,Duarte08SPM}. The proposed architecture in Figure~\ref{fig:arch} aims to simplify the procedure of capturing compressive measurements.
%
%
%
Recall in compressive sensing~\cite{donoho2006compressed}
\begin{equation}\label{eq:yAx}
\yv = \Amat \xv + \nv, 
\end{equation}
where $\Amat\in {\mathbb R}^{M\times N}$ is the sensing matrix, $\xv$ is the desired signal (denoting vectorized image in this work and $N = N_x N_y$), $\yv \in{\mathbb R}^M$ is the measurement and usually $M\ll N$; ${\bf n}$ symbolizes the measurement noise.
Each row of $\Amat$ defines a pattern for the elements of the aperture assembly, and the number of columns in a sensing matrix is equal to the number of total elements in the aperture assembly. 
Each value in a row of the sensing matrix is used to define the transmittance of an element of the aperture assembly. A row of the sensing matrix therefore completely defines a pattern for the aperture assembly, and it allows the sensor to make one measurement (one element in $\yv$) for the given pattern of the aperture assembly. 
The number of rows of the sensing matrix is the number of measurements, which is usually much smaller than the number of aperture elements in the aperture assembly (the number of pixels). 
Let $\Amat$ be a matrix whose entries are random numbers between 0 and 1. To make a measurement, the transmittance, $P_{i,j}$, of each aperture element is controlled to equal the value of the corresponding entry in a row of the sensing matrix. The sensor integrates all rays transmitted through the aperture assembly. The intensity of the rays is modulated by the transmittances before they are integrated. Therefore, each measurement from the sensor is the integration of the intensity of rays through the aperture assembly multiplied by the transmittance of respective aperture element. 
A measurement from the sensor is hence a projection of the image onto the row of the sensing matrix. 
By changing the pattern of the transmittance of the aperture assembly,
the compressive measurement is captured corresponding to a given sensing matrix.

\section{Mathematical Formulation}
\label{Sec:Form}
In this section, we formally define what an image is in the proposed architecture and how it is related to the measurements from the sensor.
In particular, we will describe how a pixelized image can be reconstructed from the measurements taken from the sensor.

\subsection{Virtual Image on the Aperture Assembly}
The analog scene ${I}(u,v)$ can be defined on any plane between the scene and the sensor and for convenience, we here define the image on the aperture assembly.
Considering one point on the aperture assembly, there is a ray starting from a point on the scene, passing through the point $(u,v)$ on the aperture assembly, and ending at the sensor. Let $r(u,v; t)$ denote the intensity of this ray arriving at the sensor, passing through the aperture assembly $(u,v)$ at time $t$. The image point ${I}(u,v)$ can be defined by the integration of the ray in a time interval $\Delta t$
\begin{equation}\label{eq:Ixy}
{I}(u,v) = \int_0^{\Delta t} r(u,v; t) dt.
\end{equation}
It is worth noting that $I(u,v)$ is continuously defined in the region of the aperture assembly and can be considered as an analog image.

Similarly, define the continuous transmittance pattern of the aperture assembly as $P(u,v)$, The measurement collected by the sensor is the integration of the rays through the aperture assembly, modulated by the transmittance pattern $P(u,v)$,
\begin{equation}\label{eq:z}
z = \iint I(u,v) P(u,v) du dv.
\end{equation}
Equation (\ref{eq:z}) defines the measurement of the sensor based on the continuous image in (\ref{eq:Ixy}). In the following, we analyze how the pixel is defined and then we can get the discretized image.

\subsection{Pixelized Image}
Since only a single sensor is used in our system, the 
virtual image defined in (\ref{eq:Ixy}) can be pixelized by the aperture assembly, which is similar to the single-pixel camera~\cite{Duarte08SPM}. 
Considering each element of the aperture assembly of size $\Delta_u \times \Delta_v$, each pixel value of image can be represented by the integration of all the rays passing through the aperture element $(i,j)$: 
\begin{eqnarray}
I(i,j) &=& \int_{(i-1)\Delta_u}^{i\Delta_u} \int_{(j-1)\Delta_v}^{j \Delta_v} I(u,v) du dv .
\end{eqnarray}
Image $\{I(i,j)\}_{i,j=1}^{N_x, N_y}$ can be vectorized to a long vector ${\bf I} \in {\mathbb R}^{N}$, which is the target signal $\xv$ in the compressive sensing model (\ref{eq:yAx}) and the image size is of $N_x \times N_y$.

\subsection{Compressive Measurement} 
When the aperture assembly is programmed to implement a compressive sensing matrix, the transmittance $P(i,j)$  of each aperture element is controlled to equal the value of the corresponding entry in the sensing matrix. 
For the $m$-th measurement, the entries in row $m$  of the sensing matrix are used to program the transmittance of the aperture elements. 
Specifically, let the sensing matrix ${\bf A}$  be a matrix whose entries, $a_{i,j}$, are random numbers between 0 and 1. Let $P^{m}(i,j)$ be the transmittance of aperture element $(i,j)$ for the $m$-th measurement. 
Following (\ref{eq:z}), the $m$-th measurement can be represented as
\begin{eqnarray}\label{eq:zm}
z_m &=& \iint P^{m}(u,v) I(u,v) du dv= \sum_{i=1}^{N_x} \sum_{j=1}^{N_y} I(i,j) P^m(i,j),
\end{eqnarray}
where we consider that each aperture element, inside the region $[(i-1)\Delta_u,i\Delta_u] \times [(j-1)\Delta_v,j \Delta_v]$, $P^m(u,v) = P^m(i,j)$, is a constant programmed by the user.
Let $\xv$ denote the vectorized formulation of $I$, and then we have $z_m = \sum_n a_{m,n} x_n$.
After taking $M$ measurements, in the noiseless case, we can write the sensing process as 
\begin{equation}
{\bf \zv} = {\bf A \xv},
\end{equation}
which is now the compressive sensing formulation as in (\ref{eq:yAx}) (if the noise is considered).
Then, our problem becomes that given ${\bf A}$ (designed and {\em known  a priori}) and the measurements $\bf \zv$, how to reconstruct the image $\bf \xv$.
A new algorithm, which explores the sparsity of the local region in a transformation domain, {\em e.g.,} the DCT (Discrete Cosine Transformation) used in the JPEG compression, is proposed below to achieve the state-of-the-art reconstruction.

\section{Reconstruction Algorithm}
\label{Sec:Algo}
The theory developed for compressive sensing~\cite{Candes05compressed} requires that the target signal $\xv$ is sparse and following this, researchers have extended the sparsity of the signal in a transformation domain. For example, the wavelet transformation~\cite{Yuan14TSP} is generally used in the image reconstruction of compressive sensing.
Since the wavelet is usually imposed  globally on the entire image, we term this as global transformation (the Total Variation used in~\cite{Li13COA} is also performed globally). Under these transformations, the coefficients usually have the same (or similar) number of the image pixels and these coefficients are approximately sparse (or compressible~\cite{Yuan14TSP}). 
A variety of algorithms (see references in~\cite{Mertzler14Denoising}) have been developed to explore the sparsity of these coefficients.
Let $\Tmat$ denote the basis of transformation, the signal $\xv$ can be represented as
\begin{equation}\label{eq:xTtheta}
\xv= {\bf T \thetav},
\end{equation} 
where $\thetav$ denotes the coefficients in the transformation domain.
Plug (\ref{eq:xTtheta}) into (\ref{eq:yAx}), we have
\begin{equation}\label{eq:yxTtheta}
\yv = {\bf A T \thetav} = {\bf R \thetav}.
\end{equation} 
Most algorithms have been developed (for example~\cite{Figueiredo07GPSR,Daubechie04IST}) to solve:
\begin{equation}\label{eq:theta_l1}
\min_{\thetav} \|{\bf \thetav}\|_1, \quad {\text{subject to}} \quad {\yv = \Rmat \thetav},
\end{equation} 
or the variates of this problem with $\|\cdot\|_1$ denoting the $\ell_1$ norm.
Advantages of these algorithms include fast computation and low memory cost by assuming that ${\bf T}$ is easily invertible.
When ${\bf T}$ is an orthonormal matrix, such as the wavelet transformation, solving $\bf \thetav$ in (\ref{eq:theta_l1}) is equivalent to solve $\bf \xv$. However, in other cases as stated below, there may be no one by one correspondence between $\bf \xv$ and $\bf \thetav$.
Recently, researchers have found that by exploiting the local (region) sparsity of the image can achieve better results than the global transformation methods, examples including the low-rank regularizer~\cite{Dong14TIP} and the denoising based method~\cite{Mertzler14Denoising}.

\subsection{Exploring the Local Sparsity}
Inspired by the JPEG compression and the emerging dictionary learning algorithms~\cite{Aharon06TSP} for local patches, researchers have developed algorithms based on the sparsity of the local patches in specific basis or learned dictionaries~\cite{Zhang14SP}.
Consider a general case, in which $\bf T$ in (\ref{eq:yxTtheta}) is now not a linear independent basis, but a more general dictionary, ${\bf D} \in {\mathbb R}^{q\times p}$ and usually $p\gg q$.
Let $\tilde {\bf X} \in{q \times N_p}$ denote the patch formulation of the image $\bf \xv$, with $n$ denoting the vectorized patch length ({\em e.g.}, a $\sqrt{q} \times \sqrt{q}$ two dimensional patch) and $N_p$ symbolizing the number of patches extracted from the image $\xv$.
We can write $\tilde{\bf X} = {\bf Q} \xv$, where ${\bf Q}$ denotes an extraction and permutation matrix.  
Under the dictionary ${\bf D}$,
\begin{eqnarray}\label{eq:xDalpha}
\tilde {\bf X} = {\bf D \Smat},
\end{eqnarray}
where ${\Smat} \in {\mathbb R}^{p \times N_p}$ is a matrix whose columns are the coefficients of each patch and it is usually sparse.
Recall the key of compressive sensing is to find a sparse basis and with the formulation in (\ref{eq:xDalpha}), $\bf D$ is the basis and ${\Smat}$ plays the role of sparsity.

\subsection{Formulation of the Reconstruction}
By adapting the above formulation, the compressive sensing problem is no longer the same as in (\ref{eq:theta_l1}), but can be formulated as an iterative two-step procedure:
\begin{itemize}
	\item Step 1: To minimize the following objection function
	\begin{equation} \label{eq:Jx}
	J({\bf \xv}) = \|{\bf \yv - A\xv}\|_2^2.
	\end{equation}
	\item Step 2: To solve the following minimization problem
	\begin{equation} \label{eq:alpha_L1}
	\min_{\Smat} \|{\Smat}\|_1, \quad {\text{subject to}} \quad {\tilde{\bf X} = {\bf D \Smat}}.
	\end{equation}
\end{itemize}  
It is worth noting that in step 1, we don't need ${\bf \xv}$ to be sparse, but when ${\bf A}$ is a compressive sensing matrix, a regulizer term is needed, {\em e.g.}, the TV used in~\cite{Huang14TIP}. An alternative solution is to use the majorization-minimization (MM) approach~\cite{Figueiredo07MM}, which will be described below.
In step 2, when ${\bf D}$ is given, (\ref{eq:alpha_L1}) is the conventional sparse coding problem (or, for each column of $\Smat$, it is compressive sensing problem).

\subsection{Proposed Algorithm: SLOPE}
Different from the formulation in (\ref{eq:theta_l1}), for which diverse algorithms have been developed to solve the unique $\bf \thetav$, thus to obtain ${\bf \xv}$, in the above formulation, we aim to get ${\bf \xv}$ directly and this is also the final target of reconstruction.
Therefore, step 2 can be recognized as a denoising step, while step 1 aims to update ${\bf \xv}$.
In this paper, we solve step 1 with the Euclidean projection~\cite{Liao14GAP}, which, under the condition of the sensing matrix (Hadamard matrix) used in our camera, is same as the iterative shrinkage/thresholding (IST) derived from the MM. However, we prove that a larger range of the step-size (than the IST) still leads to good convergence.  

\subsubsection{Update ${\bf \xv}_k$}
\label{Sec:update_x}
Under the compressive sensing framework, (\ref{eq:Jx}) has a solution in closed form, which is to use pseudo-inversion 
${\bf \xv}={\bf A}^{\top} ({\bf A}{\bf A}^{\top})^{-1} {\bf \yv}$. By using the MM approach to minimize $J({\bf \xv})$, we can avoid solving a system of linear equations. At each iteration $k$ of the MM approach, we should find a function $G_k({\bf \xv})$ that coincides with $J({\bf \xv})$ at ${\bf \xv}_k$ but otherwise upper-bounds $J({\bf \xv})$. We should choose
a majorizer $G_k(\xv)$ which can be minimized more easily (without having to solve a system of equations).
The $G_k(\xv)$ is defined as
\begin{equation}
G_k(\xv) = \|{\bf \yv - A\xv}\|_2^2 + (\xv -\xv_k)^{\top}(\eta {\boldsymbol I} - {\bf A}^{\top} {\bf A})(\xv -\xv_k),
\end{equation}
where ${\boldsymbol I}$ denotes the identity matrix and $\eta$ must be
chosen to be equal to or greater than the maximum eigenvalue of ${\bf A}^{\top}{\bf A}$. For the Hadamard sensing matrix used in our camera, the maximum eigenvalue of ${\bf A}^{\top}{\bf A}$ is easily obtained ($\eta\ge 1$). 
The update equation of $\xv_k$ is given by:
\begin{equation}\label{eq:ISTxk}
\xv_{k+1} = \xv_k +  \frac{1}{\eta} {\bf A}^{\top}(\yv - {\bf A} \xv_k).
\end{equation}
The GAP algorithm, proposed in~\cite{Liao14GAP}, which has been demonstrated high performance in video compressive sensing~\cite{Yuan14CVPR}, has the following update equation:
\begin{equation}\label{eq:GAPxk}
\xv_{k+1} = \xv_k +  {\bf A}^{\top}({\bf A A}^{\top})^{-1}(\yv - {\bf A} \xv_k)
\end{equation}
Under some condition of the sensing matrix ${\bf A}$, as the Hadamard matrix used in our system, $\bf A A^{\top}$ is the identity matrix and thus (\ref{eq:GAPxk}) is same as (\ref{eq:ISTxk}) with $\eta =1$.

Based on the above two methods, we propose a more general update equation for $\xv_k$:
\begin{equation}
\xv_{k+1} = \xv_k +  \xi{\bf A}^{\top}({\bf A A}^{\top})^{-1}(\yv - {\bf A} \xv_k),
\end{equation}
where $\xi$ is the step-size and we provide the step-size selection
with convergence guarantee in Theorem~\ref{thm:anytime}.

\subsubsection{Denoising}
\label{Sec:denoising}
Next, we consider the problem in (\ref{eq:alpha_L1}), which can be recognized as a denoising problem, and different from the previous algorithms~\cite{Figueiredo07MM}, the denoising is now performed on the patches of the image obtained by (\ref{eq:ISTxk}). Though various algorithms has been developed for this denoising algorithm and the dictionary learning approach has achieved state-of-the-art results, the computational cost is always high and thus not efficient.
On the other hand, the patches based transformation method~\cite{Dabov07BM3D} can provide excellent results efficiently.  
Inspired by this, and the success of JPEG compression, we here propose that, instead of learning a new dictionary, the DCT transformation will be used on the {\em overlapping} local patches for denoising.
We will demonstrate that, by using this and one more clustering step on these patches, we achieve better results than the advanced algorithm in~\cite{Mertzler14Denoising}\footnote{The DAMP algorithm~\cite{Mertzler14Denoising} may provide better results when the sensing matrix is Gaussian. However, here we focus on the Hadamard sensing matrix implemented in our system, which is realistic.}.
The efficiency of the transformation based denoising compared with the dictionary learning algorithm is the fast inverse transformation.
It is worth noting that when the transformation is performed on the onverlapping patches, the formulation in (\ref{eq:theta_l1}) does not fit anymore and thus we are using (\ref{eq:alpha_L1}) here, which can be reformulated as:
\begin{equation}\label{eq:xwTalpha}
\xv = {\bf W T \alphav}, 
\end{equation}
where $\Wmat$ is the average matrix for the same pixels in different overlapping patches and ${\bf T}$ is the inverse transformation matrix and ${\bf \alphav}$ is thus the vector of coefficients based on local patches.
Note that though ${\bf WT}$ is a fat matrix (more columns than rows), its pseudo-inverse can be computed efficiently; we can easily obtain $\xv$ from ${\bf \alphav}$ and vice verse.
This is the key that the proposed algorithm is efficient.

\subsubsection{Clustering Patches}
When better results are desired, we can achieve sparser representation of the image by clustering the patches into different groups, and in each group, we can perform a 3D transformation ({\em e.g.}, 2D DCT in space and a wavelet on the 3rd dimension), which can be represented as
\begin{eqnarray}\label{eq:xwT3alpha}
\xv^{(c)} = {\Wmat} ({\bf T}_{3}\otimes{\bf T}_{2} \otimes {\bf T}_{1} ) { \alphav}^{(c)}, 
\end{eqnarray}
where $^{(c)}$ denotes the $c$-th cluster and ${\bf T}_1, {\bf T}_2, {\bf T}_3$ symbolize the transformation bases in the first, second and third dimension, respectively.
${\alphav}^{(c)}$ is coefficients of patches in the $c$-th cluster
This clustering procedure can be implemented by $k$-means or block matching approaches.
Note that (\ref{eq:xwT3alpha}) is invertible; we can easily obtain $\xv^{(c)}$ from ${\bf \alphav}^{(c)}$ and vice verse.

The next step is to perform denoising (shrinkage) on ${\bf \alphav}$, which can be done using the soft-thresholding~\cite{Daubechie04IST}. However, how to select the threshold is always a problem and usually a cross-validation is required.
For the algorithm proposed here, the thresholding is performed on each cluster and thus is more challenging.
We propose an efficient way to select the threshold below.

\subsubsection{Determination of the Threshold}	
The GAP algorithm~\cite{Liao14GAP} enjoys the anytime property by using a particular way to threshold the coefficients. The basic idea to keep the non-zero coefficients as the same (or related) number of the measurement.
When the orthonormal transformation is used, the coefficients have the same dimension of ${\bf \xv}$. However, in our case, when the overlapping patches is used, there are far more coefficients than the dimension of ${\bf \xv}$, which is  implicitly represented by ${\bf \Wmat}$ in (\ref{eq:xwTalpha}).
Therefore, we extend the method in~\cite{Liao14GAP} by keeping the same compressive sensing ratio for each cluster in the coefficients.
Specifically, considering the compressive sensing ratio (CSr) defined by 
\begin{equation}
\text{CSr} = \frac{\text{number of row in } {\bf A}}{\text{number of column in } {\bf A}},
\end{equation}
we keep the non-zero number of coefficients in each cluster in proportion to (CSr$\times$the total number of coefficients in this cluster). We have found that this is very efficient in both simulation and real datasets.
This can also be seen as an adaptive threshold $\lambda^{(c)}_k$ imposed on the coefficients $\alphav^{(c)}$ for each cluster at every iteration:
\begin{eqnarray}
\betav^{(c)}_k &=& \alphav^{(c)}_k\odot \max\left\{1-\frac{\lambda^{(c)}_k}{|\alphav^{(c)}_k|},0\right\},
\end{eqnarray}
which is a shrinkage/thresholding operation~\cite{Beck09IST} and $\odot$ denotes the element-wise (Hadamard) product; $k$ symbolizes the iteration and $^{(c)}$ signifies the cluster number.
This implies that
\begin{eqnarray}
\beta^{(c)}_{k,i} &=&\left\{\begin{array}{lcc}
\alpha^{(c)}_{k,i}\left(1-\frac{\lambda_k^{(c)}}{|\alpha_{k,i}^{(c)}|}\right), & {\rm if}
& |\alpha_{k,i}^{(c)}|\ge \lambda_k^{(c)}, \\
0, &  &{\rm otherwise},
\end{array}\right.
\end{eqnarray}
where $\alpha^{(c)}_{k,i}$ is the $i$-th element of $\alphav^{(c)}_k, \forall i\in c$-th cluster.
The method described above provides an efficient way to select $\lambda_k^{(c)}$.

\subsection{Summary of SLOPE}
The optimization problem investigated here based on local overlapping patches in (\ref{eq:xwT3alpha}) as well as the global transformation based approach in (\ref{eq:theta_l1}) can be summarized as:
\begin{eqnarray} \label{eq:problem}
\min_{\bf \alphav} \|{\bf \alphav}\|_1, \quad {\text{subject to}} \quad \yv = \Amat {\cal H} \alphav,
\end{eqnarray}
where ${\cal H}$ symbolizes the transformation or basis and $\xv = {\cal H}\alphav$.
For the proposed SLOPE algorithm, the average matrix $\Wmat$ is also manifested in this ${\cal H}$. 

The proposed algorithm  can be summarized as an iterative two-step approach:
\begin{eqnarray}
\xv_{k+1} &=& \tilde{\xv}_k + \xi \Amat\ts (\Amat \Amat\ts)^{-1} (\yv - \Amat \tilde{\xv}_{k}), \label{eq:slope_x}\\
&\stackrel{\Amat \Amat\ts = \Imat}{=}&\tilde{\xv}_k + \xi \Amat\ts  (\yv - \Amat \tilde{\xv}_{k}), \label{eq:IST_x}\\
\betav^{(c)}_k &=& \alphav^{(c)}_k\odot \max\left\{1-\frac{\lambda^{(c)}_k}{|\alphav^{(c)}_k|},0\right\}, \label{eq:sh_alpha}
\end{eqnarray}
where 
\begin{itemize}
	\item $\alphav^{(c)}_k$ is obtained from $\xv_k$ via the transformation on overlapping patches in each cluster as shown in (\ref{eq:xwT3alpha}).
	\item $\tilde{\xv}_k$ is obtained from $\{\betav_k^{(c)}\}$ by transforming $\{\betav_k^{(c)}\}$  back to the image domain. 
	That is, $\tilde{\xv}_k$ is obtained from (\ref{eq:xwT3alpha}) by replacing $\alphav$ by $\betav$.
\end{itemize}

Since our algorithm is based on the shrinkage the coefficients of local overlapping patches, we term it as SLOPE (Shrinkage of Local Overlapping Patches Estimator), which is summarized in Algorithm~\ref{algo:slope}, and the `local' here denotes that the transformation is performed on local patches, rather than the global transformation performed on the entire image, {\em e.g.}, the wavelet.
\begin{center}
	\begin{algorithm}[htbp!]
		\caption{SLOPE}
		\begin{algorithmic}[1]
			\REQUIRE Measurements ${\yv}$, sensing matrix $\Amat$, and $\xi$.
			\STATE Initial $\xv_0 = \Amat\ts (\Amat \Amat\ts)^{-1} \yv$.
			\FOR{$k=1$ \TO Max-Iter }
			\STATE Update $\xv_k$ by Eq. (\ref{eq:slope_x}).
			\STATE Extract the overlapping patches from $\xv_k$.
			\STATE Cluster patches into $C$ clusters based on similarity (if necessary).
			\STATE Obtain the coefficients of each cluster $\{\alphav^{(c)}\}_{c=1}^C$ by imposing the transformation on the patches in the same cluster.
			\STATE Obtain the shrinked coefficients $\{\betav^{(c)}\}_{c=1}^C$ via the shrinkage/thresholding operation in (\ref{eq:sh_alpha}).
			\STATE Update $\tilde{\xv}_k$ by transforming $\{\betav^{(c)}\}_{c=1}^C$ back to the image (pixel) domain.
			\ENDFOR
		\end{algorithmic}
		\label{algo:slope}
	\end{algorithm}
\end{center}

When we write the optimization problem as in (\ref{eq:problem}), it seems the same as (\ref{eq:theta_l1}). However, significant difference exists when we solve them.
For the problem in (\ref{eq:problem}), existed algorithms usually solve the coefficients, $\thetav$, directly, instead of the desired signal $\xv$, as they assume there is one-by-one correspondence (each coefficient contributes equally to the signal) between $\thetav$ and $\xv$ (since wavelet is usually used).
However, when the overlapping patches are used, it is different to update $\xv$ as in (\ref{eq:slope_x}) from updating $\thetav$, as each coefficient is weighted differently (one pixel corresponds to several different coefficients). 
We unveil the difference below.
Consider updating $\thetav$ directly, and the solution to the first step (\ref{eq:Jx}) will be
\begin{eqnarray}\label{eq:thetav_k+1}
\thetav_{k+1} &=& \thetav_k + \xi \Rmat\ts (\Rmat \Rmat\ts)^{-1} (\yv - \Rmat {\thetav}_{k}).
\end{eqnarray} 
Note that $\Rmat$ is a matrix including the sensing matrix $\Amat$, the pixel averaging matrix $\Wmat$ and the transformation matrix (or the dictionary); $\Rmat = \Amat \Wmat \Tmat$. This leads to that $\Rmat \Rmat\ts$ is not an identity matrix and it is not easy to explicitly write this matrix. Therefore, updating $\xv$ directly as in (\ref{eq:slope_x}) is more straightforward and saves a lot of computational workload as well as memory.
On the other hand, because of the overlapping patches, the coefficients are not equally important and $\Wmat$ imposes weights for each coefficient.
Left-multiplying $\Wmat\Tmat$ on (\ref{eq:thetav_k+1}) will lead to (\ref{eq:slope_x}):
\begin{eqnarray}
\Wmat\Tmat \thetav_{k+1} &=& \Wmat\Tmat \thetav_k + \xi \Wmat\Tmat  (\Amat \Wmat \Tmat)\ts ((\Amat \Wmat \Tmat) (\Amat \Wmat \Tmat)\ts)^{-1} (\yv - \Amat \Wmat \Tmat {\thetav}_{k}), \\
\xv_{k+1} &=& \xv_k + \xi \Wmat\Wmat\ts (\Wmat\Wmat\ts)^{-1}  \Amat\ts  (\yv - \xv_{k}) =  \xv_k + \xi  \Amat\ts  (\yv - \Amat\xv_{k})\label{eq:xk+1_Tw},
\end{eqnarray}
where we have used $\Tmat\Tmat\ts = \Imat$ and $\Amat\Amat\ts = \Imat$ and note that
$\Wmat\Wmat\ts $ is a diagonal matrix~\cite{Dong14TIP} with each diagonal element in $(0,1]$.

If $(\Rmat \Rmat\ts)^{-1} $ is not used as in (\ref{eq:thetav_k+1}), it will be
\begin{eqnarray}\label{eq:thetav_k1}
\thetav_{k+1} &=& \thetav_k + \xi \Rmat\ts  (\yv - \Rmat {\thetav}_{k}).
\end{eqnarray} 
This will bias the solution of $\thetav$, since it treats each coefficient equally~\cite{Bioucas-Dias2007TwIST}. After several iterations, the error will be accumulated and therefore the results are not as good as updating $\xv$ directly.
To see this explicitly, left-multiplying $\Wmat\Tmat$ on both sides of (\ref{eq:thetav_k1}), we have
\begin{eqnarray}
\Wmat\Tmat \thetav_{k+1} &=& \Wmat\Tmat \thetav_k + \xi \Wmat\Tmat  (\Amat \Wmat \Tmat)\ts  (\yv - \Amat \Wmat \Tmat {\thetav}_{k}). \\
\xv_{k+1} &=& \xv_k + \xi \Wmat\Wmat\ts  \Amat\ts  (\yv - \Amat\xv_{k})  \label{eq:xk+1_w},
\end{eqnarray}
where we have used $\Tmat\Tmat\ts = \Imat$ and we can see from (\ref{eq:xk+1_w}) that since $\Wmat \Wmat\ts$ is not an identity matrix and thus it is different from (\ref{eq:IST_x}).
$\Wmat$ plays the role of weighting each coefficient for the overlapping patches, since each pixel belongs to different patches. When the coefficients $\thetav$ are transformed back to pixels, each pixel should have equal weight (since they are equally important) and therefore, $(\Rmat \Rmat\ts)^{-1}$ in (\ref{eq:thetav_k+1}) balances this importance.
However, when (\ref{eq:thetav_k1}) is used, each pixel will have a different weight as in (\ref{eq:xk+1_w}) (because of $\Wmat\Wmat\ts$) and the reconstruction error is thus introduced.

\subsection{Convergence of SLOPE}
We now prove that under the lensless compressive imaging case considered in our system, $\Amat\Amat\ts = \Imat$, SPLOE is an anytime algorithm; the solution sequence $\{\xv_k\}_{k=1}^\infty$ is monotonically converging to the true signal, by selecting the proper $\lambda_k$ at each iteration with a certain range of $\xi$.

Consider the true image is $\xv^*$ and 
\begin{equation}
\xv^* = {\cal H}\alphav^*,
\end{equation}
with $\alphav^*$ denoting the (true) sparse coefficients in the transformation domain. 
We need the following conditions to prove the anytime property of SLOPE:
\begin{enumerate}
	\item [a)] Initialization with 
	\begin{eqnarray}
	\xv_0 &=& \Amat\ts (\Amat \Amat\ts)^{-1} \yv \stackrel{\Amat \Amat\ts = \Imat}{=} \Amat\ts \yv \label{eq:x0};
	\end{eqnarray}
	\item[b)]
	For $k$-th iteration, select $\lambda_k$ such that
	\begin{equation}
	\|\betav_k\|_1 \ge \|\alphav^*\|_1.  \label{eq:ell1_ball}
	\end{equation}
	where $\|\cdot\|_1$ denotes the $\ell_1$-norm, the summation of absolute values of each entry.
	\item[c)]
	Consider that in each iteration, $\lambda_k$ is selected to keep at most $m^*_{\lambda_k}$ nonzero elements in $\betav_k$, with $m^*_{\lambda_k}< N_c$, where $N_c$ is the number of coefficients (the dimension of $\alphav$ or $\betav$; it is much larger than $N$, the dimension of $\xv$, when the overlapping patch is used).
	We need the RIP (restricted isometry property) condition~\cite{cs_Candes06,Candes05,Candes06ITT} on $\Rmat = \Amat \Wmat\Tmat$ such that
	\begin{eqnarray}\label{eq:rip_m*}
	0<\delta_{m^*_{\lambda_k} + K^*} <1,
	\end{eqnarray}
	where $K^*$ is the number of non-zero elements in $\alphav^*$ and $m^*_{\lambda_k} \ge K^*$.
\end{enumerate}
To see the intuition behind (\ref{eq:x0})-(\ref{eq:ell1_ball}), we consider $\yv = \Amat \xv^*$ as a line (manifold) and $\xv_0$ is initialized by touching the line with a large $\ell_1$ ball (formed by $\alphav$) (because $\yv = \Amat \xv_0$). At each iteration, we shrink the $\ell_1$-ball by the operation in (\ref{eq:sh_alpha}) but with the condition that this $\ell_1$-ball formed by $\betav_k$ is larger than the true $\ell_1$-ball formed by $\alphav^*$.
Eventually, in the ideal case, the $\ell_1$-ball formed by $\betav_k$ will be the same as the true $\ell_1$-ball formed by $\alphav^*$; thus $\xv_k$ recovers the true signal $\xv^*$.
From the initialization in (\ref{eq:x0}), we have $\yv = \Amat \xv_0 = \Amat {\cal H} \alphav_0$. Since $\yv = \Amat \xv_0 = \Amat {\cal H} \alphav^*$ and $
\alphav^*$ is a minimizer, we have $\|\alphav_0\|_1 \ge \|\alphav^*\|_1$, and thus, there exists $\lambda_0$ such that $\|\betav_0\|_1 \ge \|\alphav^*\|_1$.
The condition in (\ref{eq:ell1_ball}) can be relaxed to that, for each step, $\|\alphav_k\|_1 \ge \|\alphav^*\|_1$, so that $\lambda_k$ exists to hold the anytime convergence of SLOPE. Please refer to~\cite{WangAIT15} for an alternative interpretation.


\begin{theorem}	
	\label{thm:anytime}
	Let $\Rmat$ satisfy RIP, {\em i.e.}, there exists $0<S^*\le N_c$ and $0<\delta_{S^*} <1$ such that
	\begin{eqnarray}
	(1-\delta_{S^*})\|\alphav_s\|_2^2 \le \|\Rmat_s \alphav_s\|_2^2 \le (1+\delta_{S^*})\|\alphav_s\|_2^2,
	\end{eqnarray}
	for all $0<s \le S^*$, where $\alphav_s \in{\mathbb R}^{s}$, $\Rmat_s\in{\mathbb R}^{M\times s}$ is formed from columns of $\Rmat$. 
	Let $\alphav^*$ be the true solution of (\ref{eq:problem}) and its sparsity is $K^*$.
	Let $m^*_{\lambda_k}$ be the sparsity of $\betav_k$ given by (\ref{eq:sh_alpha}).
	If there exists a sequence $\lambda_k>0$, such that
	\begin{eqnarray}
	\|\betav_k\|_1 &\ge& \|\alphav^*\|_1, 
	\quad {\rm and}\quad m^*_{\lambda_k} + K^* \le S^*,
	\end{eqnarray}
	then $\alphav_k$ from Algorithm~\ref{algo:slope} (initialized by (\ref{eq:x0}))  monotonically converges to $\alphav^*$ for all $\xi \in(0,2)$; Algorithm~\ref{algo:slope} is an anytime algorithm.
	%
\end{theorem}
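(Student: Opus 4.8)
\emph{Overall strategy.} I would prove the statement by recasting the two half‑steps (\ref{eq:slope_x})--(\ref{eq:sh_alpha}) as the composition of (i) an affine map acting in the measurement‑consistency geometry and (ii) a Euclidean projection onto an $\ell_1$‑ball, and then show that this composition is a quasi‑contraction toward the true pair $(\xv^*,\alphav^*)$. Since $\Amat\Amat\ts=\Imat$, the operator $\Pmat:=\Amat\ts(\Amat\Amat\ts)^{-1}\Amat=\Amat\ts\Amat$ is the orthogonal projector onto $\mathrm{range}(\Amat\ts)$, and $\yv=\Amat\xv^*$ gives $\Amat\ts(\Amat\Amat\ts)^{-1}\yv=\Pmat\xv^*$; write $\Rmat=\Amat{\cal H}$ as in the excerpt. \emph{Step~A (the linear update is firmly non‑expansive toward $\xv^*$).} Substituting $\tilde\xv_k={\cal H}\betav_k$ and $\Amat\ts(\Amat\Amat\ts)^{-1}\yv=\Pmat\xv^*$ into (\ref{eq:slope_x}) gives the exact recursion $\xv_{k+1}-\xv^*=(\Imat-\xi\Pmat)({\cal H}\betav_k-\xv^*)$. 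Because $\Pmat$ is an orthogonal projector, $\Imat-\xi\Pmat$ has spectrum $\{1,\,1-\xi\}$, so decomposing ${\cal H}\betav_k-\xv^*={\cal H}(\betav_k-\alphav^*)$ along $\mathrm{range}(\Pmat)$ and its complement yields
\begin{equation}
\|\xv_{k+1}-\xv^*\|_2^2=\|{\cal H}(\betav_k-\alphav^*)\|_2^2-\xi(2-\xi)\,\|\Pmat{\cal H}(\betav_k-\alphav^*)\|_2^2 ,
\end{equation}
which is exactly where the window $\xi\in(0,2)$ enters (it is the range making $\xi(2-\xi)>0$). Using $\|\Pmat\zv\|_2=\|\Amat\zv\|_2$ and $\Amat{\cal H}=\Rmat$, the subtracted term equals $\|\Rmat(\betav_k-\alphav^*)\|_2^2$, and since $\betav_k-\alphav^*$ has at most $m^*_{\lambda_k}+K^*\le S^*$ nonzero entries, the RIP hypothesis gives $\|\Pmat{\cal H}(\betav_k-\alphav^*)\|_2\ge\sqrt{1-\delta_{S^*}}\,\|\betav_k-\alphav^*\|_2$. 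This is where the bookkeeping $m^*_{\lambda_k}+K^*\le S^*$ is consumed, and it must be checked to hold at \emph{every} iteration.

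\emph{Step~B (the shrinkage is a projection onto an $\ell_1$‑ball containing $\alphav^*$).} The shrinkage (\ref{eq:sh_alpha}) is precisely the Euclidean projection of $\alphav_k$ onto the (product of per‑cluster) $\ell_1$‑ball(s) whose radius equals $\|\betav_k\|_1$. Condition (\ref{eq:ell1_ball}), $\|\betav_k\|_1\ge\|\alphav^*\|_1$, places $\alphav^*$ inside that set, and since projection onto a closed convex set is firmly non‑expansive,
\begin{equation}
\|\betav_k-\alphav^*\|_2^2\le\|\alphav_k-\alphav^*\|_2^2-\|\alphav_k-\betav_k\|_2^2 ,
\end{equation}
in particular $\|\betav_k-\alphav^*\|_2\le\|\alphav_k-\alphav^*\|_2$. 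Because $\alphav_k$ is the cluster‑wise orthonormal patch transform of $\xv_k$ and the forward‑then‑inverse patch transform returns the image (so ${\cal H}\alphav_k=\xv_k$), the coefficient‑space and image‑space errors are tied together exactly as in the GAP analysis of~\cite{Liao14GAP}, allowing the two chains to be composed.

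\emph{Step~C (monotonicity, convergence, identification of the limit).} Chaining Steps~A and~B gives $\|\xv_{k+1}-\xv^*\|_2^2\le\|\xv_k-\xv^*\|_2^2-c_k$ with $c_k\ge\xi(2-\xi)(1-\delta_{S^*})\|\betav_k-\alphav^*\|_2^2+\|\alphav_k-\betav_k\|_2^2\ge 0$; hence the error sequence is non‑increasing — this is the \emph{anytime}/monotone property — and, bounded below, convergent. Telescoping forces $c_k\to 0$, so $\|\betav_k-\alphav^*\|_2\to 0$ and $\|\alphav_k-\betav_k\|_2\to 0$, which yields $\alphav_k\to\alphav^*$ and $\xv_k={\cal H}\alphav_k\to\xv^*$; uniqueness of the $\ell_1$‑minimizer $\alphav^*$ under RIP pins the limit to the true signal.

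\emph{Main obstacle.} The delicate point is the bookkeeping between image space and the \emph{redundant} coefficient space induced by overlapping patches: the synthesis operator ${\cal H}=\Wmat\Tmat$ is rectangular and not norm‑preserving (the overlap matrix $\Wmat\Wmat\ts$ is only diagonal with entries in $(0,1]$), so Steps~A and~B must be stitched together either in the overlap‑weighted norm in which ${\cal H}$ is an isometry on the consistency subspace, or by carrying $\Wmat\Wmat\ts$ explicitly as in the derivation of (\ref{eq:xk+1_Tw}); one must also confirm that the identification of (\ref{eq:sh_alpha}) with an $\ell_1$‑ball projection survives the cluster‑wise choice of the thresholds $\lambda_k^{(c)}$. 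The non‑expansiveness window $\xi\in(0,2)$ and the per‑iteration sparsity budget $m^*_{\lambda_k}+K^*\le S^*$ are the remaining load‑bearing ingredients.
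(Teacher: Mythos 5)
Your proposal is correct and reaches the paper's conclusion, but it replaces the paper's central combinatorial argument with a cleaner convex-analytic one, so the two are worth contrasting. The paper expands $r_k=\|\alphav_{k+1}-\alphav^*\|_2^2-\|\alphav_k-\alphav^*\|_2^2$ into three terms, $2(\betav_k-\alphav^*)\ts\zetav_k-\|\zetav_k\|_2^2+[(\xi-1)^2-1]\,\|\Rmat\ts(\Rmat\Rmat\ts)^{-1}\Rmat(\alphav^*-\betav_k)\|_2^2$ with $\zetav_k=\betav_k-\alphav_k$, and then spends most of the appendix on an element-wise sign analysis over the index sets ${\cal J}_+,{\cal J}_-,{\cal I}_+$ (culminating in its Lemma~1) to show that $\|\betav_k\|_1\ge\|\alphav^*\|_1$ forces the first term to be non-positive. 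Your Step~B obtains exactly the same inequality in one line by recognizing the soft-thresholding as the Euclidean projection of $\alphav_k$ onto the $\ell_1$-ball of radius $\|\betav_k\|_1$, which contains $\alphav^*$ by hypothesis, and invoking the obtuse-angle property of projections; note that $2(\betav_k-\alphav^*)\ts\zetav_k-\|\zetav_k\|_2^2=\|\betav_k-\alphav^*\|_2^2-\|\alphav_k-\alphav^*\|_2^2$, so your firm-non-expansiveness bound is literally the sum of the paper's first two terms. Likewise your Step~A's projector decomposition $(\Imat-\xi\Pmat)$ with spectral factor $\xi(2-\xi)=1-(\xi-1)^2$ reproduces the paper's third term without the long expansion, and your Step~C's telescoping of $c_k$ merges the paper's two-case endgame (its cases $r_k<0$ versus $\|\Rmat\ts(\Rmat\Rmat\ts)^{-1}\Rmat(\alphav^*-\betav_k)\|_2^2=0$) into a single argument, applying RIP to the sparse difference $\betav_k-\alphav^*$ exactly as the paper does. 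What your route buys is brevity, a conceptual explanation of why the $\ell_1$-radius condition (\ref{eq:ell1_ball}) is the right hypothesis, and an immediate extension to the cluster-wise thresholds (projection onto a product of balls), at the price of needing the per-cluster condition $\|\betav_k^{(c)}\|_1\ge\|\alphav^{*(c)}\|_1$ there. The one load-bearing issue you flag --- that ${\cal H}=\Wmat\Tmat$ is a redundant, non-isometric synthesis operator, so the image-space and coefficient-space error norms in Steps~A and~B do not coincide exactly --- is real, but the paper's own proof silently makes the same identification when it declares the image-space and coefficient-space monotonicity ``equivalent'' and pushes ${\cal H}^{-1}$ through the recursion; you are at least as rigorous as the source on this point, and your suggestion to work in the overlap-weighted norm is a sensible way to close it.
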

\begin{proof}
	The full proof is presented in the Appendix.
	Here we review the main steps. Recall that the true signal is $\xv^*$ and the algorithm provides a sequence of solution $\xv_k$ at each iteration. We prove in the Appendix that:
	\begin{itemize}
		\item[1)] when $\xi\in(0,2]$, $\|\xv_k -\xv^*\|_2^2$ ($\|\alphav_k -\alphav^*\|_2^2$) monotonically non-increases;
		\item[2)] when $\xi\in(0,2)$, $\|\xv_k -\xv^*\|_2^2$ ($\|\alphav_k -\alphav^*\|_2^2$) monotonically decreases and  converges to a constant. 
		\item[3)] when $\xi\in(0,2)$, with the RIP condition on $\Rmat$, $\|\xv_k -\xv^*\|_2^2$ ($\|\alphav_k -\alphav^*\|_2^2$) monotonically converges to zero. 
	\end{itemize}
\end{proof}

It is worth noting that the algorithm in (\ref{eq:slope_x})-(\ref{eq:sh_alpha}) has the same formulation of the iterative shrinkage/thresholding algorithm (ISTA)~\cite{Daubechie04IST,Beck09IST} under the condition $\Amat\Amat\ts = \Imat$.
However, we have proved that when the step size $\xi\in(0,2)$, ISTA is an {\em anytime} algorithm if the initialization and thresholds are selected as mentioned in our algorithm.
\begin{corollary}
	When $\Amat\Amat\ts = \Imat$, the ISTA used to solve the optimization problem in (\ref{eq:problem})  is an anytime algorithm if it is initialized  using (\ref{eq:x0}) and the threshold for each step satisfies the same condition as SLOPE with the step size $\xi\in(0, 2)$.
\end{corollary}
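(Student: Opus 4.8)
The plan is to obtain the corollary as an immediate consequence of Theorem~\ref{thm:anytime}, by \emph{identifying} the ISTA recursion for problem~(\ref{eq:problem}) with the recursion of Algorithm~\ref{algo:slope}. The key observation --- already recorded in the paragraph preceding the corollary --- is that ISTA for the $\ell_1$-regularized reconstruction consists of a Landweber (gradient) step on the data-fidelity term $J$ of~(\ref{eq:Jx}), namely $\xv_{k+1}=\tilde\xv_k+\xi\Amat\ts(\yv-\Amat\tilde\xv_k)$, followed by a soft-thresholding step on the transform-domain coefficients, which is exactly~(\ref{eq:sh_alpha}). Under the hypothesis $\Amat\Amat\ts=\Imat$, the general SLOPE update~(\ref{eq:slope_x}) collapses to precisely this Landweber step~(\ref{eq:IST_x}), so the two schemes generate the \emph{same} iterate sequence $\{\xv_k\}$ (equivalently $\{\alphav_k\}$) once they are started from the same initialization~(\ref{eq:x0}) and use the same per-iteration threshold $\lambda_k$ (in the clustered case, the same $\lambda_k^{(c)}$). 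Thus ``ISTA is anytime'' is nothing but ``the iterate sequence of Algorithm~\ref{algo:slope} converges,'' which is the content of Theorem~\ref{thm:anytime}.

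The remaining task is to check that the hypotheses named in the corollary are exactly those of Theorem~\ref{thm:anytime}. The phrase ``the threshold for each step satisfies the same condition as SLOPE'' means $\|\betav_k\|_1\ge\|\alphav^*\|_1$ as in~(\ref{eq:ell1_ball}) together with the RIP order bound $m^*_{\lambda_k}+K^*\le S^*$ on $\Rmat=\Amat\Wmat\Tmat$ as in~(\ref{eq:rip_m*}), and the step-size restriction $\xi\in(0,2)$ is verbatim the same. Inspecting the proof of the theorem, I would confirm that its monotone-convergence argument invokes nothing about the recursion beyond: (i) $\yv=\Amat\xv_0$ from the initialization; (ii) the exact algebraic form of the Landweber step and of the shrinkage step; and (iii) RIP on $\Rmat$ together with the per-iteration $\ell_1$-ball containment. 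All three hold verbatim for the ISTA recursion, so the theorem's conclusion --- $\|\xv_k-\xv^*\|_2$ (equivalently $\|\alphav_k-\alphav^*\|_2$) monotonically decreases to zero --- transfers without change, and ISTA is an anytime algorithm.

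The point that deserves the most attention --- and the place where I would be explicit --- is the gap between the \emph{classical} analysis of ISTA, in which the threshold is fixed and tied to the regularization parameter of a penalized objective, and the present setting, in which $\lambda_k$ is allowed to vary across iterations (and, in the clustered version, across clusters). This is legitimate here precisely because Theorem~\ref{thm:anytime} only ever uses a \emph{per-iteration} property of the shrinkage, namely that~(\ref{eq:sh_alpha}) does not move the iterate away from $\alphav^*$ as long as $\|\betav_k\|_1\ge\|\alphav^*\|_1$; no constancy of the threshold is needed. A second, minor remark to record is that the admissible step-size range really is $(0,2)$ and not the more conservative $(0,1]$: since $\Amat\Amat\ts=\Imat$ forces $\Amat$ to have orthonormal rows, the Lipschitz constant of $\nabla J$ equals $\lambda_{\max}(\Amat\ts\Amat)=1$, so the usual ISTA range $(0,2/L)$ is exactly $(0,2)$, in agreement with the theorem. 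With these two remarks in place, the corollary is immediate.
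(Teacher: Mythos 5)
Your proposal is correct and matches the paper's approach: the paper's proof of this corollary is literally the single line ``The proof follows Theorem~\ref{thm:anytime},'' relying on the identification (stated just before the corollary) of the SLOPE recursion \eqref{eq:IST_x}--\eqref{eq:sh_alpha} with ISTA when $\Amat\Amat\ts=\Imat$, which is exactly the identification you make explicit. Your additional checks --- that the hypotheses of Theorem~\ref{thm:anytime} transfer verbatim and that the $(0,2)$ step-size range is consistent with $\lambda_{\max}(\Amat\ts\Amat)=1$ --- are sound elaborations of what the paper leaves implicit.
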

\begin{proof}
	The proof follows Theorem~\ref{thm:anytime}.
\end{proof}

{\em Remarks:}
\begin{itemize}
	\item Unlike the ISTA algorithm derived from the MM approach in (\ref{eq:ISTxk}), which needs $\eta \ge \max{\rm eig}(\Amat\ts \Amat)$ (which equals the step size $\xi = \frac{1}{\eta}$), we only need the step size $\xi\in (0,2)$. Usually, $\xi\ge1$ is used for fast convergence. 
	\item The GAP algorithm proposed in~\cite{Liao14GAP} can be seen as a special case of our algorithm with $\xi =1$ as in (\ref{eq:GAPxk}). On the other hand, GAP is developed based on weighted group $\ell_{2,1}$ norm, and it does not require $\Amat\Amat\ts = \Imat$.\footnote{Our proof can also be extended to the case which does not need $\Amat\Amat\ts = \Imat$.}
	\item 
	In order to select the appropriate $\lambda_k$ at each iteration, the method proposed in~\cite{Liao14GAP} can still be used.
	\begin{eqnarray}
	\tilde{\alphav}_k &=& {\rm sort}(|\alphav_k|, {\text {`descend'}}),\\
	\lambda_k &=& \tilde{\alpha}_{k,m^*+1}, \label{eq:def_m_star}
	\end{eqnarray}
	where $\tilde{\alpha}_{k,m^*+1}$ is the $(m^*+1)$-th entry of $\tilde{\alphav}_k $, which sorts the absolute values of $\alphav_k$ from large to small. Then the sparsity of $\betav_k$ generated by Algorithm~\ref{algo:slope} will be $m^* = m^*_{\lambda_k}$ in Theorem~\ref{thm:anytime}. We found in the experiments that setting $m^*\in [0.5N_c, N_c-1]$ always provides good results.
	Similar selection approach can also be found in \cite{WangAIT15}, where the generalized-RIP is introduced and the adaptively iterative thresholding algorithm are proved to be converged linearly. 
	\item 
	Equation (\ref{eq:ell1_ball}) is a sufficient (thus restricted) condition. Even we select a larger $\lambda_k$, the algorithm may still converge well. For instance, if we can select $\lambda_k$ such that the support of $\betav_k$ (${\cal J}_+$ in (\ref{eq:J+})) includes the support of $\alphav^*$ (${\cal I}_+$ in (\ref{eq:I+})) in each iteration, SLOPE will monotonically converge to zero. 
	\item SLOPE explores the sparsity of local patches, while conventional compressive sensing inversion algorithms are often developed based on the sparsity of wavelet coefficients. However, the wavelet coefficients are usually not sparse, but compressible~\cite{Yuan14TSP}. 
	On the other hand, the DCT coefficients for overlapping patches are sparser than the wavelet coefficients, as the number of coefficients $N_c$ is much larger than the number of wavelet coefficients (recall that $\Rmat \in {\mathbb R}^{M\times N_c}$). Similar case exists for the patch-based dictionary learning model~\cite{Elad06TIP} where the sparsity is imposed on coefficients.  
	When the patch is small, usually, only a DC coefficient is sufficient to represent a single local patch. Therefore, it is more reasonable to define the sparse level $K$ in (\ref{eq:rip_m*}) on the coefficients of overlapping patches.
	Similarly, since $N_c\gg N$, the selection of $m^*$ ($\lambda_k$) has a large degree of freedom.
	In addition, our theorem is not limited by the local patch based model; it also fits the wavelet transformation based algorithms.
	\item In the noisy case, SLOPE can also be used. It is worth noting that we only impose that $\betav_k$ is sparse, rather than $\alphav_k$. Therefore, the different between $\Amat\xv_k$ and $\Amat\tilde{\xv}_k$ can provide a good estimate of the (measurement) noise~\cite{Liao14GAP}.
	Experimental results on real data in Section~\ref{Sec:Hardware} verify the robustness of SLOPE under the noisy case.
\end{itemize}
\subsection{Relation to ADMM}
The Alternating Direction Method of Multipliers (ADMM) algorithm~\cite{ADMM2011Boyd} provides an alternative solution to a lot of optimization problems.
When the ADMM is utilized in our problem, the difference of ADMM compared with IST, GAP and SLOPE lies in how to update $\xv$ as stated in Section~\ref{Sec:update_x}.
Under the ADMM formulation, introducing regulizers $\{b,c\}$, the cost function of (\ref{eq:problem}) is:
\begin{align}
{\cal L}(\xv,\tilde{\xv},\alphav,b,c) &= \frac{1}{2}\|\yv-\Amat\xv\|_2^2 + \frac{b}{2}\|\xv-\tilde{\xv}\|_2^2 + c \|\alphav\|_1  \qquad {\text { with }}(\tilde{\xv} = {\cal H}\alphav).
\end{align}
ADMM cyclically solves the following subproblems:
\begin{eqnarray}
\xv_{k+1}&: =& \arg \min_{\xv} \frac{1}{2}\|\yv-\Amat\xv\|_2^2 + \frac{b}{2}\|\xv-\tilde{\xv}_k\|_2^2 , \label{eq:admm_x_k+1}\\ 
\tilde{\xv}_{k+1} &:= & \arg\min_{\tilde{\xv}} \frac{b}{2}\|\xv_{k+1}-\tilde{\xv}\|_2^2 + c \|\alphav\|_1. \label{eq:admm_theta_k+1}
\end{eqnarray}
While (\ref{eq:admm_theta_k+1}) can be solved using the same shrinkage/thresholding approach as described in Section~\ref{Sec:denoising}, we here focus on the update of $\xv$, to solve (\ref{eq:admm_x_k+1}).
Given $\tilde{\xv}_k$, (\ref{eq:admm_x_k+1}) is a quadratic optimization problem and $\xv$ can be simplified to:
\begin{eqnarray}
(\Amat\ts\Amat + b\Imat)\xv = \Amat\ts \yv + b \tilde{\xv},
\end{eqnarray}
which admits the following closed-form solution:
\begin{eqnarray}
\xv_{k+1}&=& (\Amat\ts\Amat + b\Imat)^{-1} (\Amat\ts \yv + b\tilde{\xv}_k).
\end{eqnarray}
Since in the CS framework, $\Amat$ is a fat matrix, $(\Amat\ts\Amat + b\Imat)$ will be a large matrix and thus the matrix inversion formula can be used to simplify the problem:
\begin{align} \label{eq:admm_xv_inv}
\xv_{k+1} = \left[b^{-1}{\Imat}-b^{-1}\Amat^{\top}(\Imat + \Amat b^{-1}\Amat^{\top})^{-1}\Amat b^{-1}\right][\Amat^{\top}\yv + b \tilde{\xv}_k],
\end{align}
In our case considered in the real system  $\Amat\Amat\ts = \Imat$, 
\begin{eqnarray}\label{eq:admm_xv_inv_sim}
\xv_{k+1} &= & \tilde{\xv}_k + \frac{\Amat\ts (\yv- \Amat \tilde{\xv}_k)}{b + 1},
\end{eqnarray}
which is same as (\ref{eq:GAPxk}) if $b = 0$ ($\xi = 1$).
Comparing the update equation of $\xv$ in (\ref{eq:admm_xv_inv_sim}) of ADMM with the update rule of our SLOPE algorithm in (\ref{eq:IST_x}), we observe that:
\begin{itemize}
	\item The ADMM formulation of updating $\xv$ is a special case of SLOPE with $\xi = \frac{1}{1+b}$.
\end{itemize}
Since $b$ is usually selected to be a small number, the ADMM update rule is very similar to the case $\xi  =1$. Therefore, the anytime property of SLOPE still holds if the ADMM updating rule is adopted.

\begin{figure}[htbp!]
	\centering
	\includegraphics[width=0.8\textwidth]{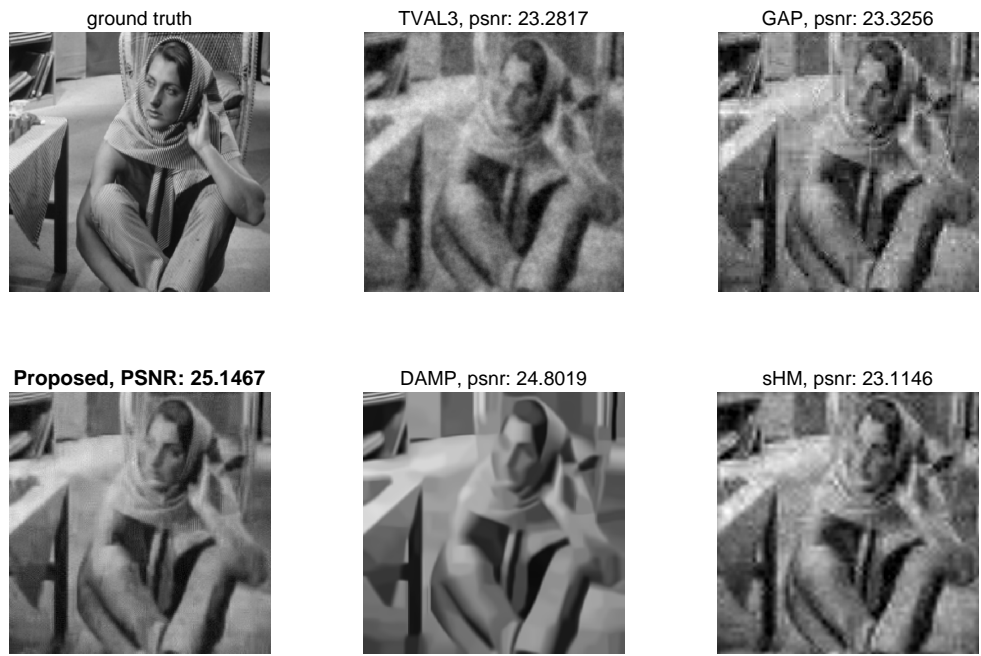}
	\vspace{-3mm}
	\caption{Simulation: reconstruction results of different algorithms at CSr$=0.1$, image size $256\times 256$.}
	\label{fig:baba01}
\end{figure}

\begin{table}[htbp!]
	\caption{Simulation: reconstruction PSNR (dB) of different images with diverse algorithms at various CSr. SLOPE is the proposed algorithm.}
	\centering
	\begin{tabular}{|c|c|c|c|c|c|c|}
		\hline Image & CSr
		&  TVAL3  &  GAP & sHM & DAMP & SLOPE \\
		\hline \hline
		\multirow{5}{*}{\includegraphics[width=1.5cm, height=1.5cm]{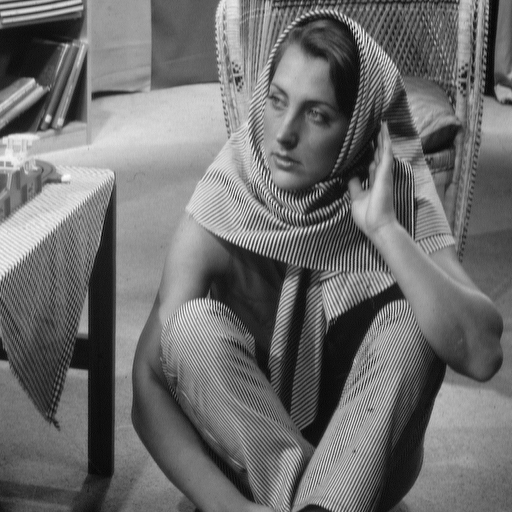}} &	0.02 & 16.91 & 19.19 & 8.28 & 18.47 & {\bf 20.06}\\
		& 0.04 & 19.37 & 20.99 & 18.37 & 20.44  & {\bf 21.72}\\
		& 0.06 & 21.48  & 22.25 & 20.90 & 22.27   &  {\bf 22.69}\\
		& 0.08 & 22.84 & 23.08 & 21.46  & 23.85  & {\bf 24.42} \\
		& 0.1& 23.42  & 23.74 & 23.34  & 25.04  & {\bf 25.15}\\
		\hline\hline
		\multirow{5}{*}{\includegraphics[width=1.5cm, height=1.5cm]{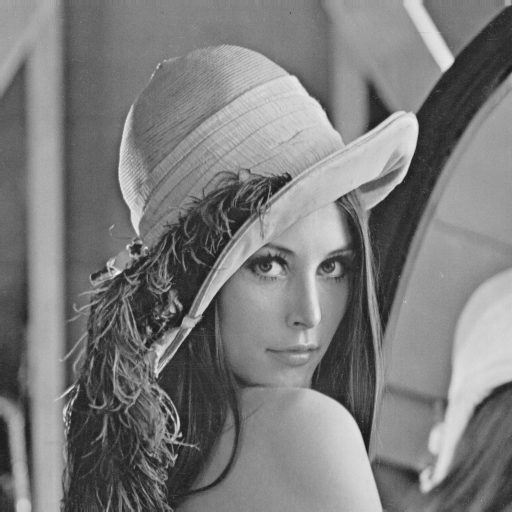}} & 0.02 & 17.39 & 20.06 & 9.28 & 19.40 & {\bf 20.85}\\
		& 0.04 & 20.01 & 21.73 & 19.24 & 21.98 & {\bf 22.63}\\
		&  0.06 & 22.23  & 22.97& 21.59 & 23.89  &  {\bf 24.76}\\
		&  0.08 & 22.76 & 23.88 & 22.47  & 25.17  & {\bf 25.60} \\
		&  0.1& 23.79  & 24.61 & 24.13  & 26.19  & {\bf 26.37}\\
		\hline\hline
		\multirow{5}{*}{\includegraphics[width=1.5cm, height=1.5cm]{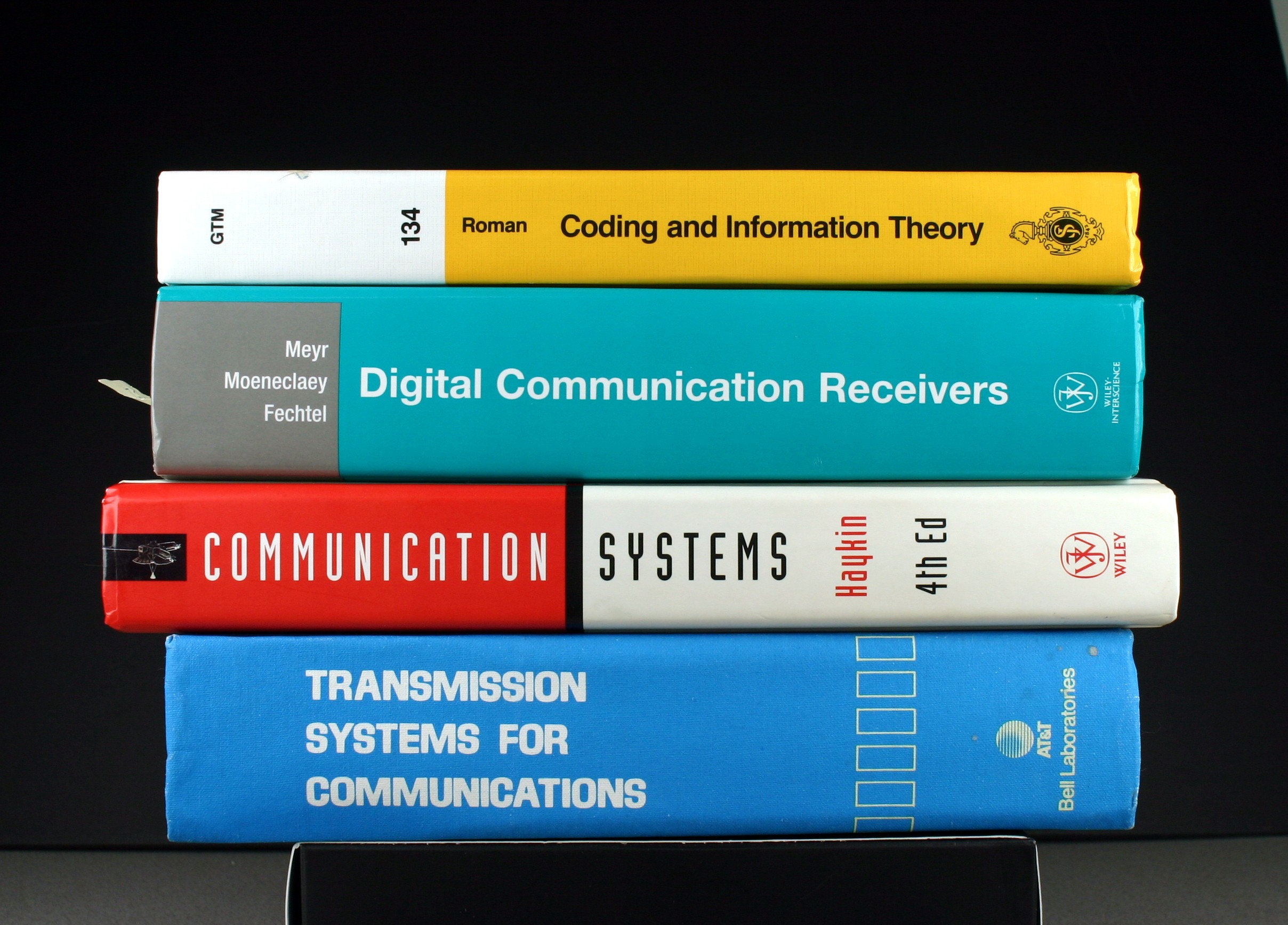}} &  0.02 & 14.99 & 17.70 & 9.85 & 15.97 & {\bf 18.37}\\
		&  0.04 & 17.04 & 19.84 & 17.23 & 18.99 & {\bf 20.45}\\
		&  0.06 & 18.99  & 21.09 & 18.98 & 21.83 &  {\bf 23.66}\\
		&  0.08 & 20.12 & 21.98 & 20.80  & 23.80  & {\bf 24.55} \\
		&  0.1& 20.82  & 22.54 & 21.44  & 25.18 & {\bf 26.41}\\
		\hline\hline
		\multirow{5}{*}{\includegraphics[width=1.5cm, height=1.5cm]{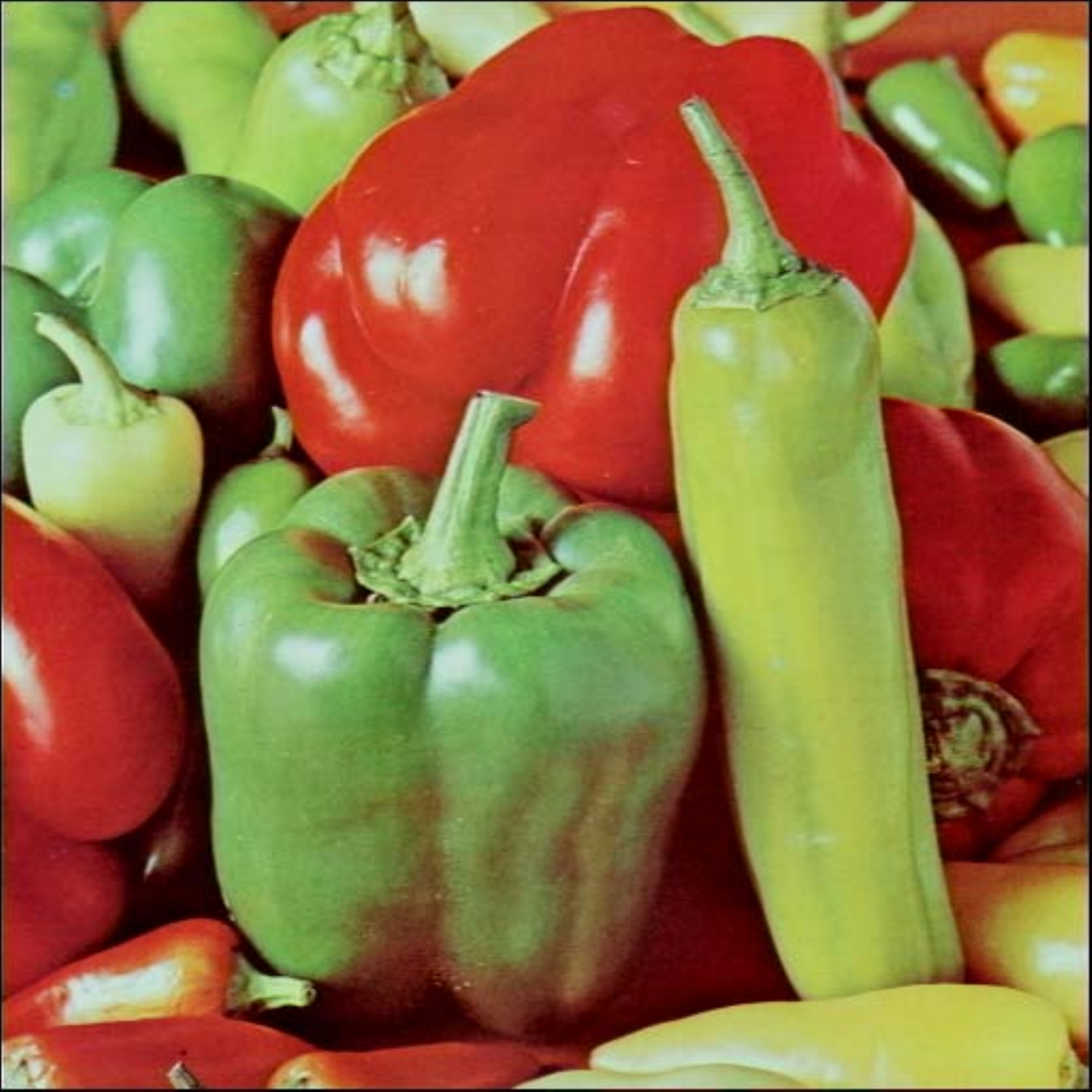}} &	0.02 & 16.80 & 18.83 & 9.70 & 17.82 & {\bf 20.00}\\
		&  0.04 & 19.44 & 20.85 & 18.60 & 20.21 & {\bf 21.85}\\
		&  0.06 & 21.27  & 22.25 & 20.83 & 22.10 &  {\bf 23.07}\\
		&  0.08 & 22.53 & 23.31 & 22.71  & 23.69 & {\bf 24.03} \\
		&  0.1& 23.13  & 24.19 & 23.81  & 25.33 & {\bf 25.88}\\
		\hline
	\end{tabular}
	\label{Table:sim_PSNR}
\end{table}	
\begin{figure}[htbp!]
	\centering
	\includegraphics[width=0.8\textwidth]{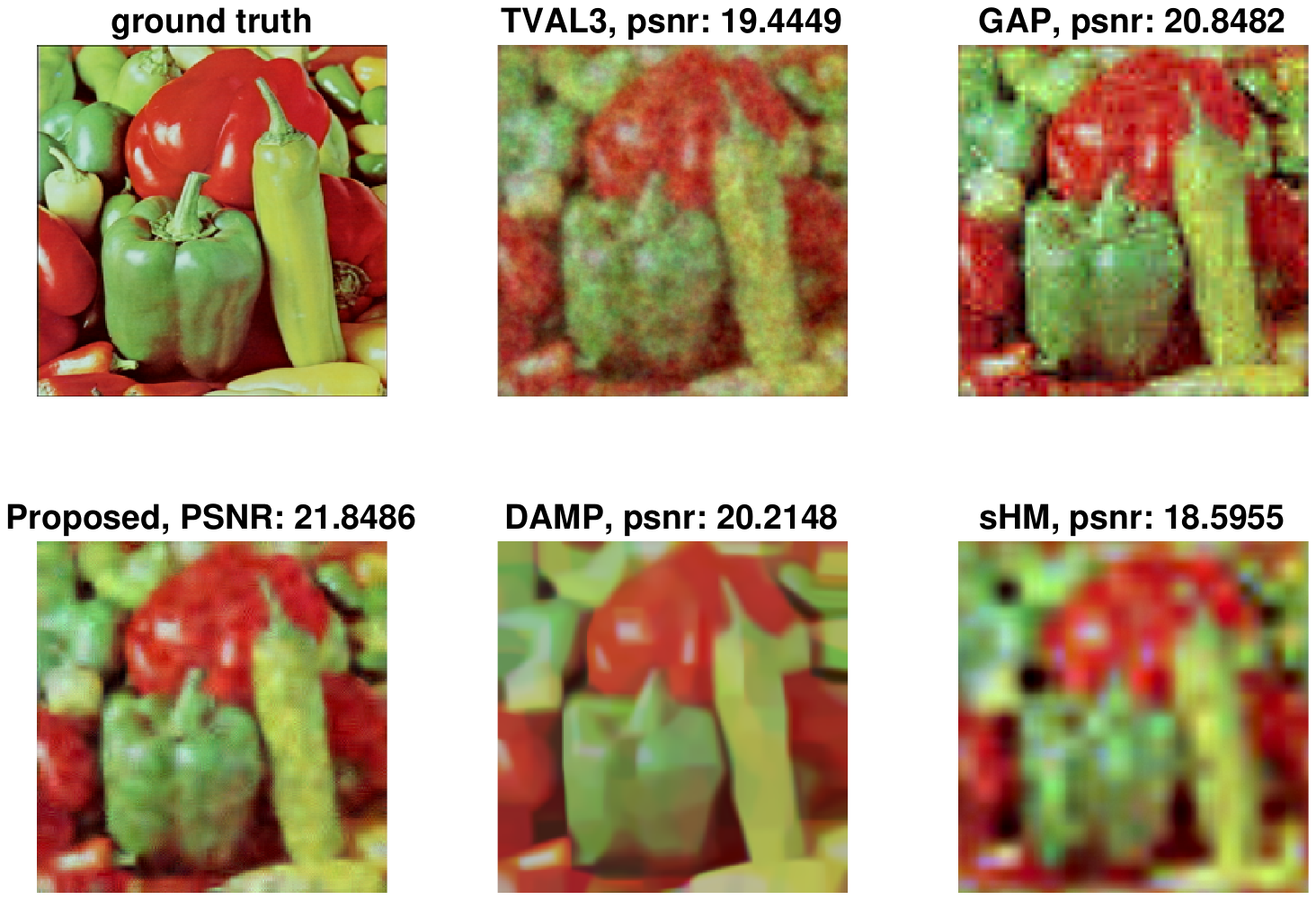}
	\vspace{-3mm}
	\caption{Simulation: reconstruction results with different algorithms at CSr$=0.04$ ($4\%$ of the total pixel number, $256\times 256 \times 3$). }
	\label{fig:peppers04}
\end{figure}
\section{Simulation Results}
\label{Sec:sim}
To verify the performance of the proposed algorithm, we conduct SLOPE on some simulation datasets, which is summarized in Table~\ref{Table:sim_PSNR}.
Different from the simulation conducted in previous papers~\cite{Mertzler14Denoising}, the sensing matrix used in our work is the permuted Hadamard matrix as implemented in our hardware. Therefore, the reported results may be different from them in other papers. 
The proposed SLOPE algorithm is compared with the following four algorithms:
1) TVAL3~\cite{Li13COA}, 2) GAP based on wavelet~\cite{Liao14GAP}, 3) DAMP~\cite{Mertzler14Denoising} with BM3D denoising, and 4) sHM by exploiting the tree structure in wavelet~\cite{Yuan14TSP}.
Since when CSr$=0.1$, very good results have been achieved for most images (Figure~\ref{fig:baba01}), we here spend more efforts on the extremely low CSr, in particular CSr$<0.1$.
For all the simulated images used here, we resize them to size $256\times 256$.
For the RGB image, we use R, G, and B sensors to sample each channel separately.
One example is shown in Figure~\ref{fig:peppers04}. 
It can be observed that DAMP over-smooths the image while the proposed algorithm reserves more details. Different types of artifacts exist in other algorithms.
Regarding the computation time, for each iteration, our algorithm takes about 0.28 seconds (at CSr = 0.1), which is similar to TVAL3 and GAP, and we found that 50 iterations are sufficient to present decent results. One iteration in DAMP takes longer than our algorithm, about 2.83 seconds for the $256 \times 256 \times 3$ RGB image.
The patch size of $8\times 8$ are used for all experiments and $\xi = 1.5$ is employed as the step size.
%
%
%
%
%
\begin{figure}[htbp!]
	\centering
	\includegraphics[width=0.7\textwidth]{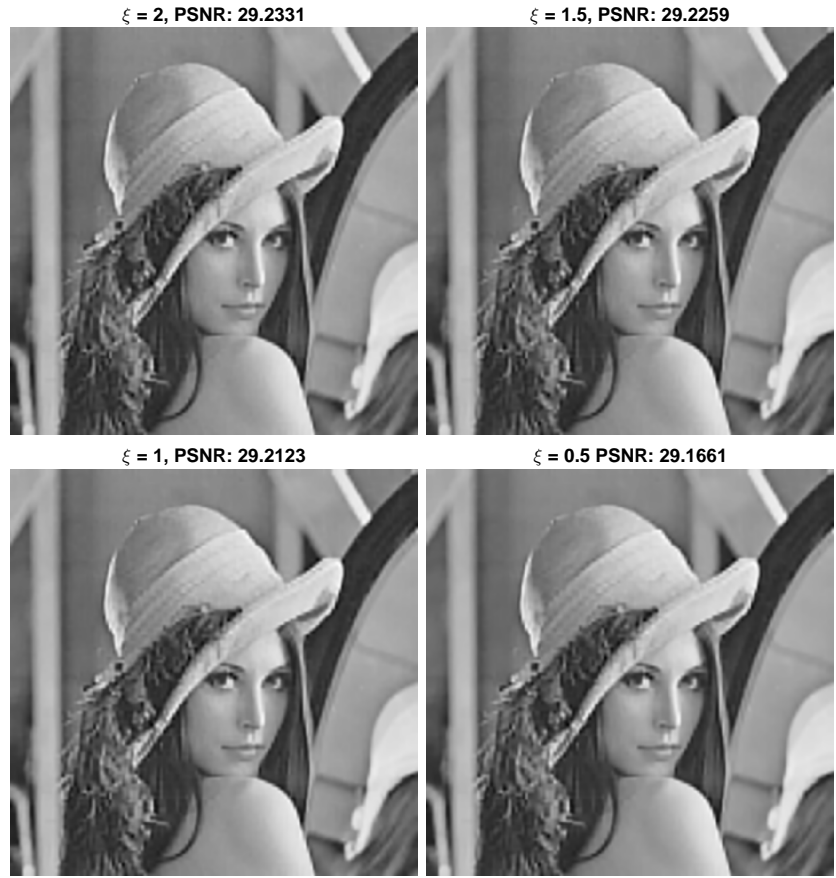}
	\vspace{-3mm}
	\caption{Simulation: reconstruction images with different step size $\xi$, for the $512\times 512$ image. Results are obtained via running the proposed algorithm 100 iterations. CSr = 0.05.}
	\label{fig:Lena_rec}
\end{figure}
\begin{figure}[htbp!]
	\centering
	\includegraphics[width=0.48\textwidth]{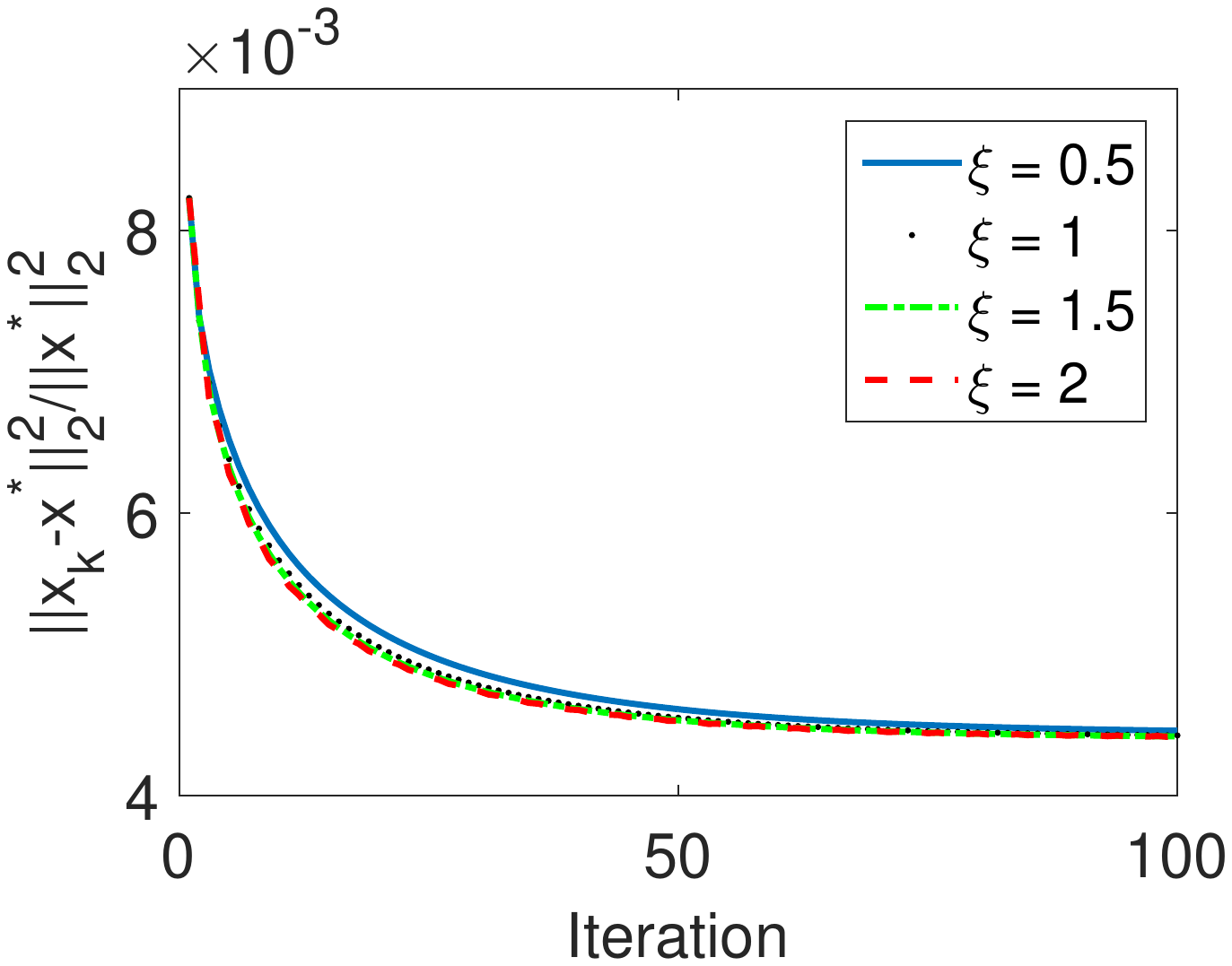}~
	\includegraphics[width=0.48\textwidth]{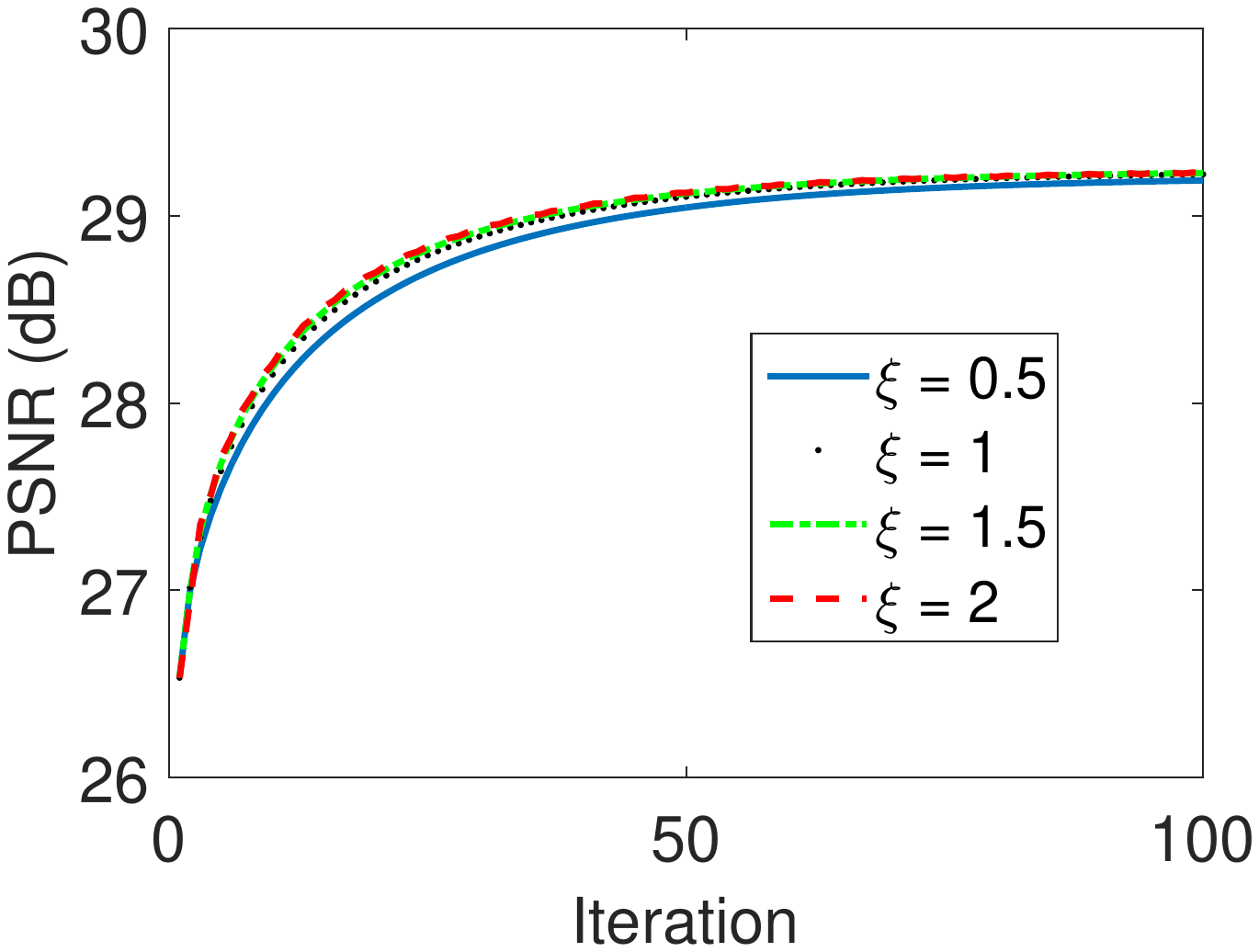}
	\vspace{-3mm}
	\caption{Reconstruction error (left) and PSNR (right) with different step size $\xi$, for the $512\times 512$ image. CSr = 0.05.}
	\label{fig:Lena_psnr}
\end{figure}
\subsection{Anytime Verification}
We next verify the anytime property of the proposed SLOPE algorithm by considering the ``Lena" image of size $512\times 512$. Similarly, the permuted Hadamard matrix is used.
We test four values of the step size $\xi = \{0.5, 1, 1.5, 2\}$ by setting CSr = 0.05. 
The results are shown in Figure~\ref{fig:Lena_rec}.
The reconstruction errors and PSNRs at each iteration with different step-size are plotted in Figure~\ref{fig:Lena_psnr}. It can be seen that the reconstruction errors are decreasing monotonically for each iteration while the PSNRs are increasing for each iteration (especially when $\xi = \{0.5, 1, 1.5\}$). 
Furthermore, we observe that a larger $\xi$ leads to faster convergence.
In addition, a larger $\xi$ usually needs a larger $m^*$ to select the $\lambda_k$, thus to ensure the anytime convergence of SLOPE.
We have observed in our experiments that when $\xi = 2$, the PSNR of the reconstructed image sometimes does not increase monotonically, which is consistent with the range of step-size in Theorem~\ref{thm:anytime}. 

\begin{figure}[htbp!]
	\centering
	\includegraphics[width=0.8\textwidth]{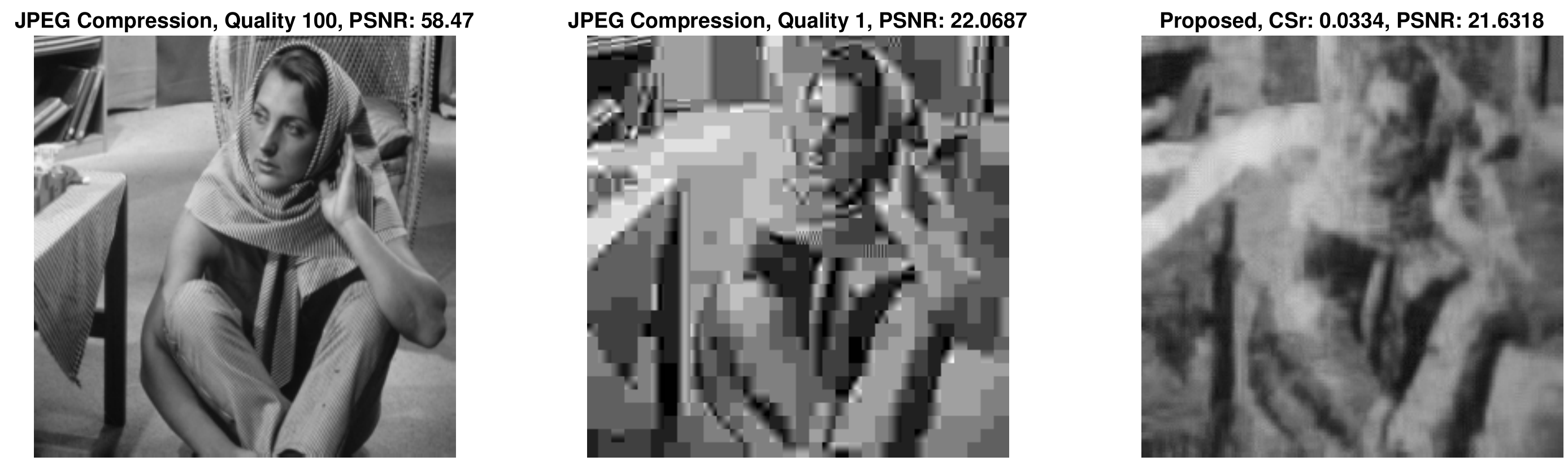}\\
	\includegraphics[width=0.8\textwidth]{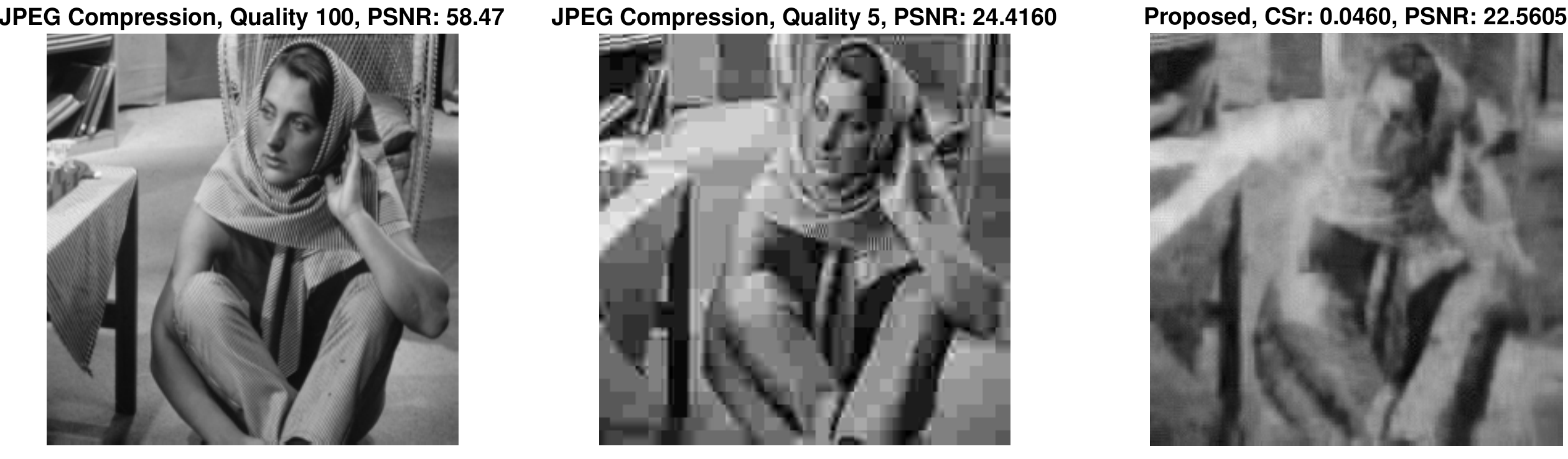}\\
	~\includegraphics[width=0.8\textwidth]{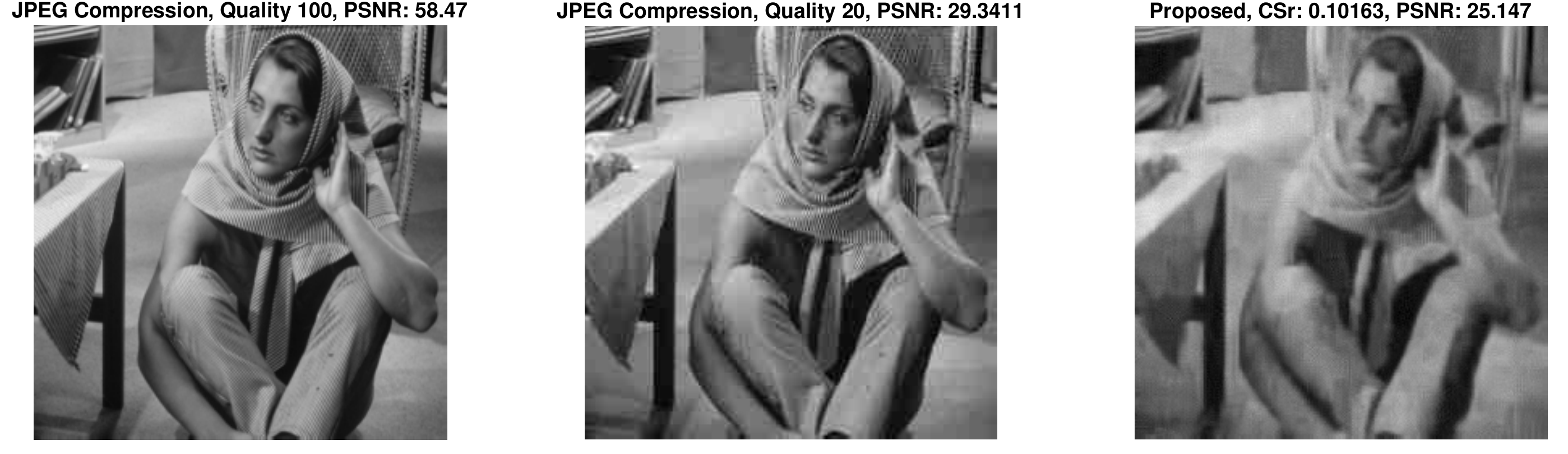}\\
	\vspace{-3mm}
	\caption{Example images: JPEG compared with the proposed SLOPE algorithm. }
	\label{fig:JPEG_comp}
\end{figure}
\begin{table}[tbp!]
	\caption{JPEG compression at different qualities compared with the proposed compressive sensing recovery}
	\centering
	{\scriptsize
		\begin{tabular}{|c|c|c||c|c||c|}
			\hline \multicolumn{3}{|c||}{JPEG Compression} & \multicolumn{2}{|c||}{SLOPE Reconstruction} & Difference  \\
			\hline  Quality & Size (bytes) & PSNR (dB) 
			&  CSr  &  PSNR (dB) & PSNR (dB)\\
			\hline \hline
			1 & 1,563 & 22.0683 & 0.0334 & 21.6318 & 0.4365\\
			\hline
			3 & 1,716 & 22.7278 & 0.0367 & 21.8601 & 0.8677\\
			\hline
			5 & 2,153 & 24.4160 & 0.0460 & 22.5605 & 1.8555\\
			\hline
			7 & 2,559 & 26.0974 & 0.0547 & 23.1834 & 2.9140 \\
			\hline
			9 & 2,946 & 26.9623 & 0.0630 & 23.5511 & 3.4112\\
			\hline
			10 & 3,141 & 27.2853 & 0.0672 & 23.8445 & 3.4408\\
			\hline
			12 & 3,494 & 27.8379 & 0.0747 & 24.1288 & 3.7091 \\
			\hline
			14 & 3,815 & 28.2981 & 0.0816 & 24.4446  & 3.8535 \\
			\hline
			16 & 4,160 & 28.6912 & 0.0890 & 24.7085  & 3.9827\\
			\hline
			18 & 4,478 & 29.0306 & 0.0958 & 24.9645 & 4.0661\\
			\hline
			20 & 4,752 & 29.3411 & 0.1016 & 25.1470 & 4.1941\\
			\hline	
		\end{tabular}}
		\label{Table:JPEG_PSNR}
	\end{table}	
\subsection{Compare with JPEG Compression}
	We now compare SLOPE under the compressive sensing framework with the JPEG compression, which is based on the sparsity of the DCT coefficients in $8\times 8$ blocks.
	We first use a PNG file as the truth and then use the script within MATLAB ``imwrite($\cdot$)" by choosing 8-bits `jpeg' compression with different qualities (100 denotes the highest quality).
	We treat the quality 100 as the standard full file size. For the `Barbara' image we used here, PSNR = 58.47dB (w.r.t. the PNG file) and the file size is 45.6KB at quality 100.
	The compressed image is obtained by changing the compression quality from 1 to 100 and we compare the file size with the full size at quality 100, computing the CSr used in this paper.
	
	Table~\ref{Table:JPEG_PSNR} summarizes the results of JPEG compression compared with the results obtained by our algorithm.
	This is a rough, high level comparison because JPEG also performs an entropy encoding after the DCT transform and quantization, while in our method, the number of compressive measurements is compared with the number of total pixels, and we did not consider the entropy coding on quantized measurements. 
	When the compression is high (lower CSr), the gap between our approach and the JPEG compression is very small ($<0.5$dB). 
	When the compression gets lower, the gap becomes larger.
	One possible reason is that when JPEG is performed on the image, the truth is available and it is very easy to capture useful information from the truth.
	However, under the compressive sensing framework and using the current algorithm, increasing a few number of measurements can help the reconstruction, but not that significantly. 
	Example images can be found in Figure~\ref{fig:JPEG_comp}. It can be seen that JPEG compression has obvious block artifacts while the results of the proposed algorithm become better progressively with increasing number of measurements.
	
	Furthermore, in JPEG compression, if we lose some bits, we may not be able to decode entire blocks. By contrast, in our compressive sensing framework, if we lose some measurements, we can still reconstruct the image, maybe not at a high fidelity.

	\begin{figure}[htbp!]
		\centering
		\includegraphics[width=0.4\textwidth, height = 4cm]{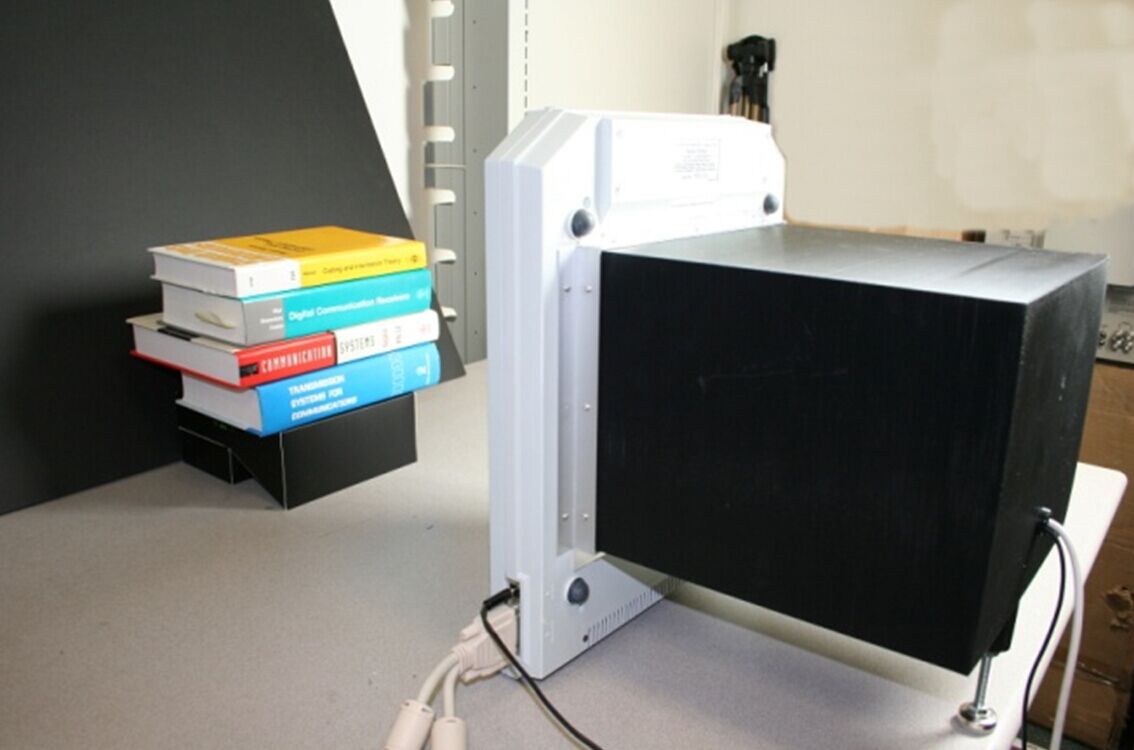}~~
		\includegraphics[width=0.4\textwidth, height =4cm]{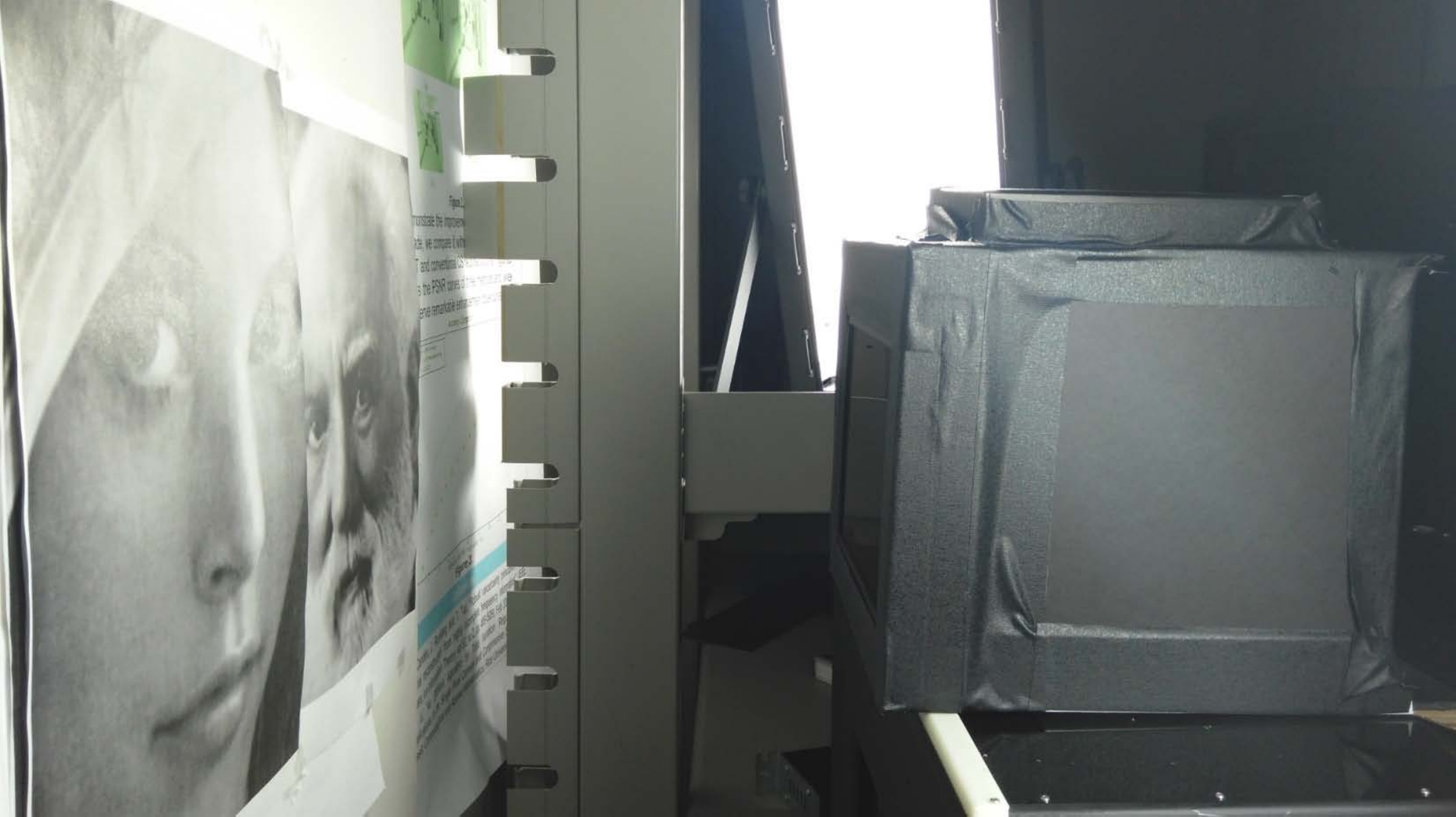}
		\caption{Prototype device. Left: the first generation in~\cite{Huang13ICIP}; right: the current version.}
		\label{fig:camera}
	\end{figure}

\section{Experimental Hardware and Real Data Results}
	\label{Sec:Hardware}
	We have implemented our camera (Figure~\ref{fig:camera}) by using a transparent LCD monitor (with HD resolution)\footnote{http://crystal-display.com/products/transparent-lcd/} as the aperture assembly, which can be programmed by using a programing code implemented on a computer.
	One advantage of using the LCD monitor is that we can control the image resolution by merging the neighbor pixels into the same aperture element. 
	We implemented our control code in C++ to program the LCD. The sensor used in our work is the same as in~\cite{Huang13ICIP}\footnote{http://www.digikey.com/catalog/en/partgroup/tsl2571-evaluation-module/33050}.
	
	There are some  practical issues when we process the real data, especially on the calibration of the sensing matrix $\Amat$. Specifically, for the binary Hadamard matrix employed in our camera, we use the $\{0,1\}$ entries, denoted as $\Amat^+$ (as the ideal case), which differs the $\{-1,1\}$ (normalized to a constant) as generally used Hadamard matrix (since $-1$ can not be implemented in the hardware in a single step). Let $\Amat^0$ denote the Hadamard matrix consisted of $\{-1,1\}$:
	\begin{eqnarray}
	\Amat^+ = \frac{\Amat^0 + {\bf 1}}{2}, \label{eq:A_+}
	\end{eqnarray}
	where ${\bf 1}$ denotes the all-one matrix (every entry is 1) with the same size of $\Amat^0$.
	In the real system, however, even we assume that each element of the aperture assemble is uniform (the transmission rate is the same), there is still some light transmitting the aperture elements where we set to zero. Therefore, for the $(i,j)$-th element of the real sensing  matrix $\Amat$
	\begin{eqnarray}\label{eq:A_real}
	A_{i,j} = \left\{\begin{array}{lll}
	g A^+_{i,j} & {\text {if}} & A^+_{i,j}=1,\\
	f & {\text {if}} & A^+_{i,j}=0,
	\end{array}\right.
	\end{eqnarray}
	where $g$ is a constant to normalize the intensity of the light when the aperture element is programmed to one and $f$ is a constant denoting the intensity of the light when the aperture element is programmed to zero.
	Fortunately, the fast Hadamard transformation can also be used by integrating (\ref{eq:A_+}) and (\ref{eq:A_real}). Moreover, $g$ and $f$ in (\ref{eq:A_real}) can be normalized to a constant.
	In all the real data, we set the step size $\xi = 1.5$.
	These results demonstrate the robustness of the SLOPE algorithm in the noisy case, since noise always exists in the real measurements.

	\begin{figure*}[htbp!]
		\centering
		\includegraphics[width=\textwidth]{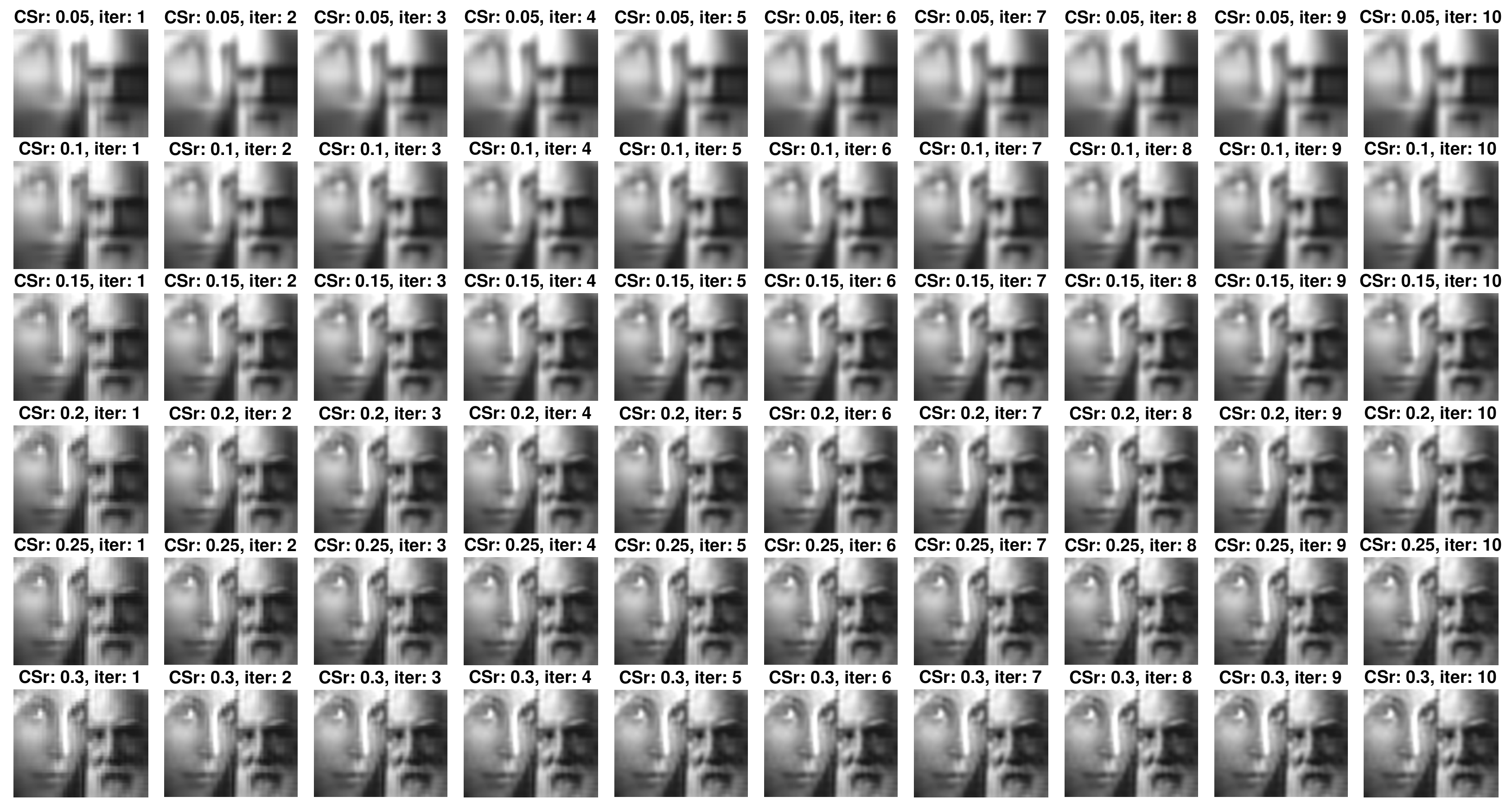}
		\caption{Real data: reconstruction results of different CSr (each row) at each iteration (each column). The image is of size $64\times 64$. }
		\label{fig:4k_real}
	\end{figure*}	

\subsection{Anytime Verification by Real Data}
\label{Sec:anytime_real}
We first consider the gray scale sensor and the image resolution of $64\times 64$ to verify the anytime property of SLOPE.
To capture compressive measurements, we use a sensing matrix which is constructed from rows of a Hadamard matrix of order $N=2^{12}$. Each row of the Hadamard matrix is permuted according to a predetermined random permutation.
The scene is composed of a photo (of two persons) printed on a paper and we capture the measurements of this photo.
Reconstruction results at various CSr using SLOPE are shown in Figure~\ref{fig:4k_real}.
For each row, we plot results at different iterations. It can be seen that the results are getting better with increasing iterations (from left to right); thus, {\em anytime}.
We further notice that only using 2 or 3 iterations (the second and third column), SLOPE can provide descent results. This verifies the performance of the proposed algorithm.
Moreover, our camera along with SLOPE can present very good results at CSr = 0.15.
This further verifies the performance of our hardware system as well as the algorithm.

	\begin{figure*}[htbp!]
		\centering
		\includegraphics[width=\textwidth]{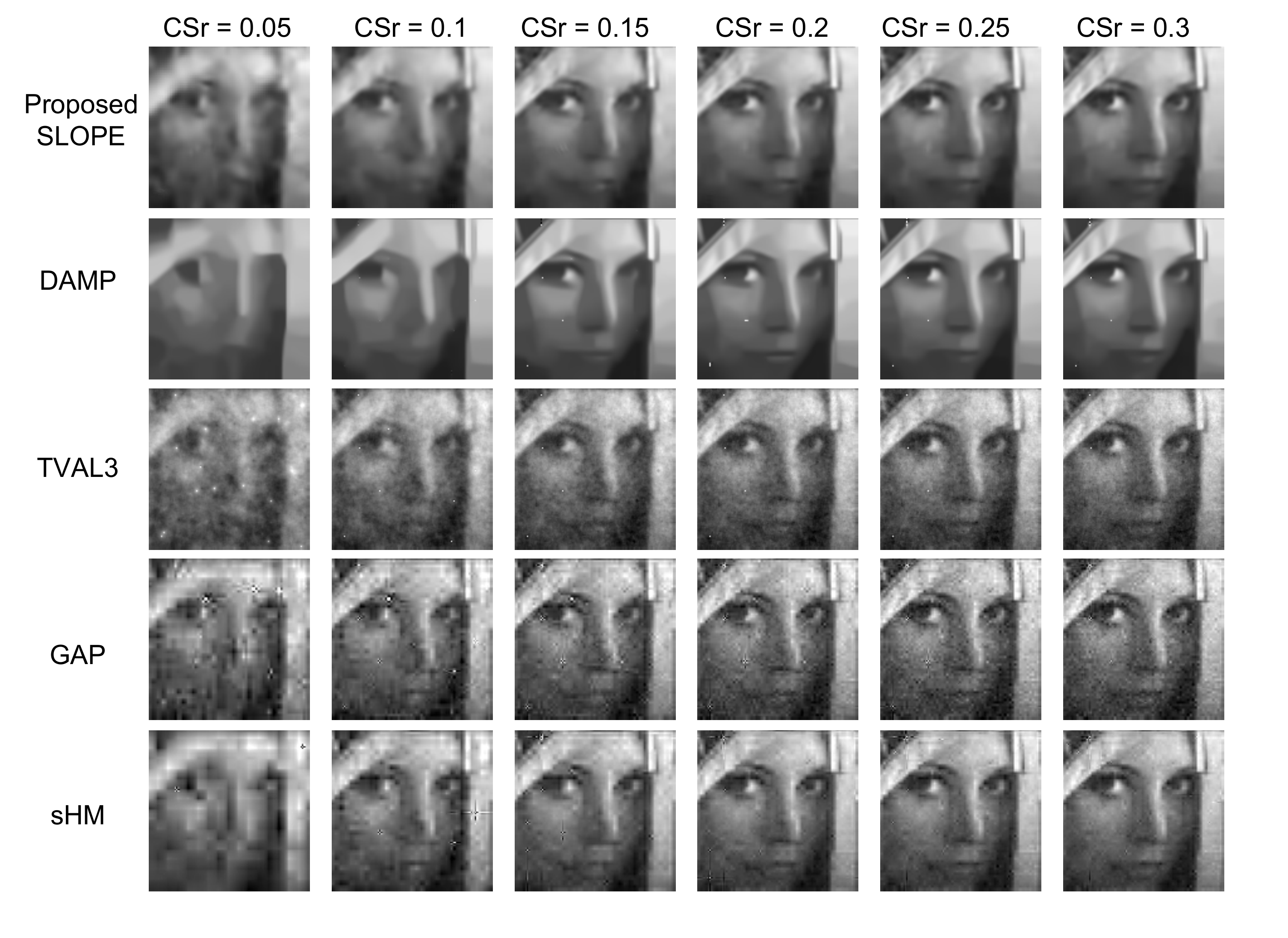}
		\caption{Real data: reconstruction results at different CSr with various algorithms. The image is of size $128\times 128$. }
		\label{fig:Lena_real}
	\end{figure*}

\subsection{Compare with Other Algorithms}
	Next, we consider the case with gray scale sensor and the image resolution of $128\times 128$. 
	To capture compressive measurements, we use a sensing matrix which is constructed from rows of a Hadamard matrix of order $N=2^{14}$. Similar to Section~\ref{Sec:anytime_real}, each row of the Hadamard matrix is permuted according to a predetermined random permutation.
	The scene is composed of a photo printed on a paper and we capture the measurements of this photo.
	Example results using different numbers of measurement are shown in Figure~\ref{fig:Lena_real}. 
	We compare the five algorithms used in the simulation.
	It can be seen that, similar to simulation results, SLOPE provides best results compared to other algorithms when CSr is small. Especially, at CSr = 0.05 and 0.1, SLOPE preserves many details of the face, for example, the left eye of ``Lena".
	DAMP introduces some ``blob" noise because the BM3D denoising approach is used.
	Surprisingly, sHM now works better than TVAL3 and GAP. This is due to the following two reasons. Firstly, the tree structure in wavelet helps the reconstruction and secondly, the Bayesian framework developed in~\cite{Yuan14TSP} is very robust to noise; it infers noise from the measurements. 
	We further observed that the algorithm developed in~\cite{Yuan14TSP} is very helpful to remove spiky noise during reconstruction.
	
		\begin{figure*}[htbp!]
			\centering
			\includegraphics[width=\textwidth]{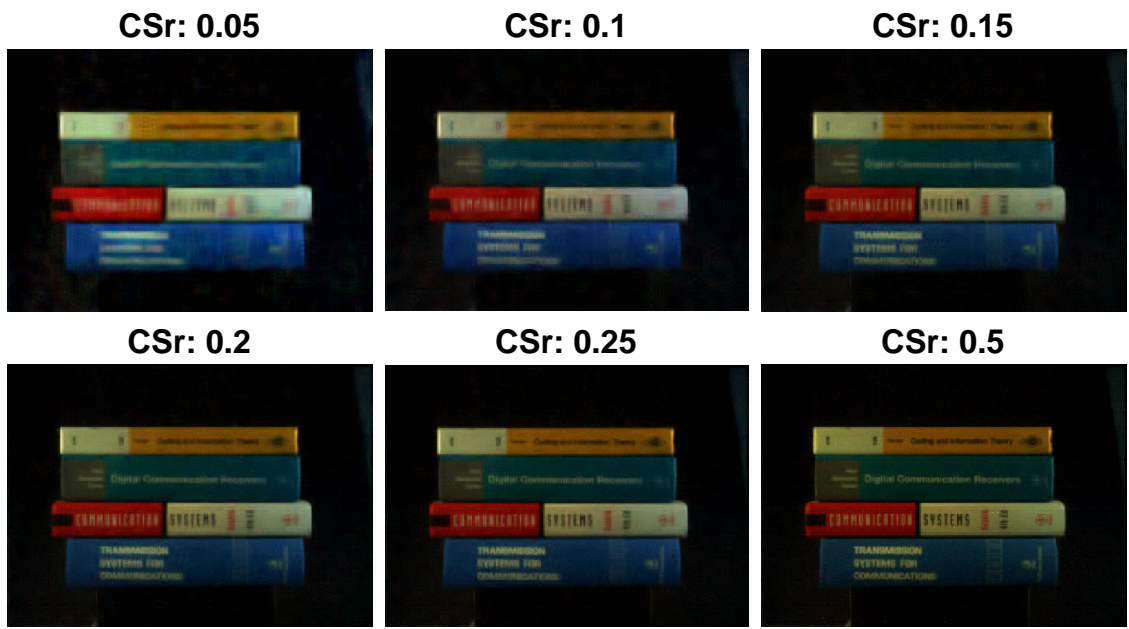}
			\caption{Real data: reconstruction results at different CSr with the proposed SLOPE algorithm. The image size is $217\times 302 \times 3$.}
			\label{fig:Books_real}
		\end{figure*}
\subsection{RGB Images}	
	Next we consider the RGB images captured by a tricolor sensor, with now a resolution of $217\times 302 \times 3$. The sensing matrix is constructed from rows of a Hadamard matrix of order $N=2^{16}$ and the first $65534$ elements are used.
	The scene is the real scene of four books as shown in Figure~\ref{fig:camera}.
	The reconstruction result is shown in Figure~\ref{fig:Books_real} with various compressive sensing ratio.
	
	Note that by using compressive sensing, we can save the sensors as well as the bandwidth. As stated before, we may progressively get better results by receiving more measurements.
	One application of compressive imaging is to get features in limited data by using a small bandwidth. 
	From the results in Figure~\ref{fig:Lena_real}, we may identity high quality features from the reconstructed image at CSr around 0.1.
	If we want to get some details, for example, the book titles in Figure~\ref{fig:Books_real}, we may need CSr around 0.2.  
	On the other hand, if we only need to identify that these are ``books" in Figure~\ref{fig:Books_real}, CSr at 0.05 may be sufficient.

\section{Conclusion}
	\label{Sec:Col}
	An architecture for lensless compressive imaging is proposed. The architecture allows flexible implementations to build simple, reliable imaging devices with reduced size, cost and complexity. Furthermore, the images from the architecture do not suffer from such artifacts as blurring due to defocus of a lens.
	A prototype camera was built using low cost, commercially available components to demonstrate that the proposed architecture is indeed feasible and practical. 
	A new compressive sensing reconstruction algorithm is proposed to achieve excellent results on both simulation and real data. The proposed algorithm enjoys the anytime property and thus provides better results as the computation increases.
	The algorithm is further compared with the JPEG compression and it demonstrates good results at high compression rates.
	Extensive results verified the performance of the algorithm as well as the imaging system. 

\appendix{Proof of Theorem~\ref{thm:anytime}}	
For simplification, we consider the case without clustering patches, and it is readily to extend to the clustering patches case.
We define the following operations for convenience:
\begin{definition}
	\begin{eqnarray}
	\tilde{\xv} &=& {\cal H}\betav \label{eq:Tinv},\\
	\alphav &=& {\cal H}^{-1} {\xv}\label{eq:T},
	\end{eqnarray}	
	where ${\cal H}^{-1}$ denotes the 2D or 3D transformation performed on the image patches including the pixel average matrix. 
\end{definition}

We further define:
\begin{definition}
	\begin{equation}
	\betav_k = \alphav_k + \zetav_k, \label{eq:deltav}
	\end{equation}
	since $\betav_k$ is obtained from the shrinkage of $\alphav_k$.
\end{definition}

Since we assume $\alphav^*$ is sparse, we define the following set:
\begin{definition}
	\begin{equation}
	{\cal I}_+~:~ \forall i,  {\text{ that }} |\alpha^*_i|>0, \label{eq:I+}\\
	\end{equation}
	where $\alpha^*_i$ denotes the $i$-th entry of $\alphav$.
\end{definition}

We further define the sets:
\begin{definition}
	\begin{eqnarray}
	{\cal J}_+&: &\forall i,  {\text{ that }} \lambda_k < |\alpha_{k,i}|, \label{eq:J+}\\
	{\cal J}_-&: &\forall i,  {\text{ that }} \lambda_k \ge |\alpha_{k,i}|, \label{eq:J-}
	\end{eqnarray}
	where $\alpha_{k,i}$ denotes the $i$-th entry of $\alphav_k$.
\end{definition}

\begin{proof}
	We start our derivation from \eqref{eq:slope_x}
	\begin{eqnarray}
	\xv_{k+1} &=& \tilde{\xv}_k + \xi \Amat\ts (\Amat \Amat\ts)^{-1} (\yv - \Amat \tilde{\xv}_{k}),
	\end{eqnarray}
	and we first prove that, $\{\|\xv_{k}-\xv^*\|_2^2\}_{k=1}^\infty$ is a monotonically non-increasing sequence.
	
	Recall that $\yv = \Amat \xv^*$,  we have
	\begin{eqnarray}
	\xv_{k+1}-\xv^* &=& \tilde{\xv}_k - \xv^* + \xi \Amat\ts (\Amat \Amat\ts)^{-1} (\yv - \Amat \tilde{\xv}_{k})\label{eq:txv}
	\end{eqnarray}
	It is equivalent to prove $\{\|\alphav_{k}-\alphav^*\|_2^2\}_{k=1}^\infty$ is a monotonically non-increasing sequence.
	
	Following (\ref{eq:Tinv})-(\ref{eq:T}),
	\begin{eqnarray}
	{\cal H}^{-1}(\xv_{k+1}-\xv^*) 
	&=&{\cal H}^{-1}(\tilde{\xv}_k - \xv^*) + \xi {\cal H}^{-1} \left[\Amat\ts(\Amat \Amat\ts)^{-1} \Amat (\xv^* -  \tilde{\xv}_{k})\right],
	\end{eqnarray} 
	which is equivalent to (please refer to (\ref{eq:thetav_k+1})-(\ref{eq:xk+1_Tw}))
	\begin{eqnarray}
	\alphav_{k+1} - \alphav^* = \alphav_{k} - \alphav^* + \zetav_k + \xi \Rmat\ts(\Rmat \Rmat\ts)^{-1}\Rmat(\alphav^* - \alphav_{k})
	\end{eqnarray}
	
	Therefore,
	\begin{eqnarray}
	\|\alphav_{k+1}-\alphav^*\|_2^2 
	&=&\|\alphav_k - \alphav^*\|^2_2  + 2(\alphav_k - \alphav^*)\ts[{\zetav}_k  + \xi \Rmat\ts(\Rmat \Rmat\ts)^{-1}\Rmat (\alphav^* -  \betav_{k}) ]\nonumber\\
	&&+ \|{\zetav}_k 
	+ \xi \Rmat\ts(\Rmat \Rmat\ts)^{-1}\Rmat (\alphav^* -  \betav_{k})\|_2^2. \label{eq:xkk+1}
	\end{eqnarray}
	What we want to prove is that
	\begin{eqnarray}
	\|\alphav_{k+1}-\alphav^*\|_2^2 
	-\|\alphav_k - \alphav^*\|^2_2 &\stackrel{\rm def}{=}& r_k \le 0 .
	\end{eqnarray}

	From (\ref{eq:xkk+1}),
	\begin{align}
	r_k &= 2(\alphav_k - \alphav^*)\ts[{\zetav}_k  + \xi \Rmat\ts(\Rmat \Rmat\ts)^{-1} (\alphav^* -  \betav_{k}) ]+ \|{\zetav}_k  + \xi \Rmat\ts(\Rmat \Rmat\ts)^{-1} (\alphav^* -  \betav_{k})\|_2^2 \nonumber\\
	&= 2(\betav_{k} - \alphav^* - {\zetav}_{k})\ts[{\zetav}_k  + \xi \Rmat\ts (\Rmat \Rmat\ts)^{-1}\Rmat (\alphav^* -  \betav_{k}) ]\|{\zetav}_k  + \xi \Rmat\ts(\Rmat \Rmat\ts)^{-1}\Rmat (\alphav^* -  \betav_{k})\|_2^2  \nonumber \\
	&= 2(\betav_{k} - \alphav^* )\ts[{\zetav}_k  + \xi \Rmat\ts (\Rmat \Rmat\ts)^{-1} \Rmat(\alphav^* -  \betav_{k})] - 2{\zetav}_{k}\ts[{\zetav}_k  + \xi \Rmat\ts (\Rmat \Rmat\ts)^{-1} \Rmat(\alphav^* -  \betav_{k})]\nonumber\\
	&~~+ [{\zetav}_k  + \xi \Rmat\ts (\Rmat \Rmat\ts)^{-1}\Rmat (\alphav^* -  \betav_{k})]\ts[{\zetav}_k  + \xi \Rmat\ts (\Rmat \Rmat\ts)^{-1} (\alphav^* -  \betav_{k})]\nonumber\\
	&= 2(\betav_{k} - \alphav^* )\ts{\zetav}_k  + 2\xi (\betav_{k} - \alphav^* )\ts\Rmat\ts (\Rmat \Rmat\ts)^{-1} (\alphav^* -  \betav_{k})  - \|{\zetav}_{k}\|^2_2 + \|\xi \Rmat\ts (\Rmat \Rmat\ts)^{-1} \Rmat(\alphav^* -  \betav_{k})\|_2^2\nonumber\\
	&= 2(\betav_{k} - \alphav^* )\ts{\zetav}_k + [ - \|{\zetav}_{k}\|^2_2]+ [(\xi-1)^2-1] \|\Rmat\ts (\Rmat \Rmat\ts)^{-1} \Rmat(\alphav^* -  \betav_{k}) \|^2_2. \label{eq:rk}
	\end{align} 

	There are three terms on the right-hand side of (\ref{eq:rk}):
	\begin{itemize}
		\item The second term $[ - \|{\zetav}_{k}\|^2_2]$ is obviously non-positive.
		\item The third term $[(\xi-1)^2-1] \|\Rmat\ts (\Rmat \Rmat\ts)^{-1} \Rmat(\alphav^* -  \betav_{k}) \|^2_2$ is non-positive when $\xi\in (0,2]$, given $\xi >0$.
	\end{itemize}
	In the following, we only need to prove the first term is non-positive.
	%
	%
	%
	\begin{align}
	& 2(\betav_{k} - \alphav^* )\ts{\zetav}_k =   
	2\left(\alphav_k\odot\max\left\{1-\frac{\lambda_k}{|\alphav_k|},0\right\} - \alphav^*\right)\ts  \left(-\alphav_k\odot\min\left\{1,\frac{\lambda_k}{|\alphav_k|}\right\}\right) \nonumber\\
	&= 2\left(\alphav_k\odot\max\left\{1-\frac{\lambda_k}{|\alphav_k|},0\right\} \right)\ts \left(-\alphav_k\odot\min\left\{1,\frac{\lambda_k}{|\alphav_k|}\right\}\right) -2 (\alphav^*)\ts \left(-\alphav_k\odot\min\left\{1,\frac{\lambda_k}{|\alphav_k|}\right\}\right). \label{eq:2terms}
	\end{align}
	where $|\cdot|$ is the absolute value performed on each element and $\odot$ is the element-wise product.

	Therefore, for the values that are not in ${\cal I}_+$, the second term of (\ref{eq:2terms}) will be zero.
	For the values in ${\cal J}_-$, the first term of (\ref{eq:2terms}) will be zero. 
	
	With the definitions of ${\cal J}_+, {\cal J}_-$ and ${\cal I}_+$:
	\begin{align}
	\nonumber
	& 2(\betav_{k} - \alphav^* )\ts{\zetav}_k= 2\sum_{i\in {\cal J}_+} (\lambda_k - |\alpha_{k,i}|)\lambda_k + 2\lambda_k\sum_{i\in {\cal J}_+} \alpha^*_i{\rm sign}(\alpha_{k,i}) + 2\sum_{i\in {\cal J}_-} \alpha^*_i\alpha_{k,i} \nonumber\\
	&= 2\lambda_k\left\{\sum_{i\in {\cal J}_+} (\lambda_k - |\alpha_{k,i}|) + \sum_{i\in {\cal J}_+} \alpha^*_i{\rm sign}(\alpha_{k,i}) + \sum_{i\in {\cal J}_-} \alpha^*_i\frac{\alpha_{k,i}}{\lambda_k}\right\}.
	\end{align}
	Given $\lambda_k\ge0$, if we want to prove $2(\betav_{k} - \alphav^* )\ts{\zetav}_k\le0$, we need:
	\begin{eqnarray}
	\sum_{i\in {\cal J}_+} (\lambda_k - |\alpha_{k,i}|) + \sum_{i\in {\cal J}_+} \alpha^*_i{\rm sign}(\alpha_{k,i}) + \sum_{i\in {\cal J}_-} \alpha^*_i\frac{\alpha_{k,i}}{\lambda_k} \le 0,
	\end{eqnarray}
	which is equivalent to
	\begin{eqnarray}
	\sum_{i\in {\cal J}_+} \alpha^*_i{\rm sign}(\alpha_{k,i}) + \sum_{i\in {\cal J}_-} \alpha^*_i\frac{\alpha_{k,i}}{\lambda_k} \le \sum_{i\in {\cal J}_+} (|\alpha_{k,i}| - \lambda_k). \label{eq:alpha}
	\end{eqnarray}
	\begin{lemma}
		$\|\alphav^*\|_1$ is an upper bound of $\sum_{i\in {\cal J}_+} \alpha^*_i{\rm sign}(\alpha_{k,i}) + \sum_{i\in {\cal J}_-} \alpha^*_i\frac{\alpha_{k,i}}{\lambda_k}$.
	\end{lemma}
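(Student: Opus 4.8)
The plan is to prove the inequality coordinatewise, using that ${\cal J}_+$ in (\ref{eq:J+}) and ${\cal J}_-$ in (\ref{eq:J-}) form a partition of the full index set $\{1,\dots,N_c\}$ of $\alphav_k$ (equivalently, of $\alphav^*$). First I would record the two elementary facts that make each summand controllable, both of which rely only on $\lambda_k>0$: for $i\in{\cal J}_+$ one has $|\alpha_{k,i}|>\lambda_k>0$, hence $\alpha_{k,i}\neq 0$ and ${\rm sign}(\alpha_{k,i})\in\{-1,+1\}$; for $i\in{\cal J}_-$ one has $|\alpha_{k,i}|\le\lambda_k$, hence $|\alpha_{k,i}/\lambda_k|\le 1$.

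With these in hand the bound follows from $ab\le|a|\,|b|$ applied termwise. For $i\in{\cal J}_+$,
\[
\alpha^*_i\,{\rm sign}(\alpha_{k,i})\ \le\ |\alpha^*_i|\,\bigl|{\rm sign}(\alpha_{k,i})\bigr|\ =\ |\alpha^*_i| ,
\]
and for $i\in{\cal J}_-$,
\[
\alpha^*_i\,\frac{\alpha_{k,i}}{\lambda_k}\ \le\ |\alpha^*_i|\,\left|\frac{\alpha_{k,i}}{\lambda_k}\right|\ \le\ |\alpha^*_i| .
\]
Summing the first estimate over ${\cal J}_+$, the second over ${\cal J}_-$, and using ${\cal J}_+\cap{\cal J}_-=\emptyset$ together with ${\cal J}_+\cup{\cal J}_-=\{1,\dots,N_c\}$ gives
\[
\sum_{i\in{\cal J}_+}\alpha^*_i\,{\rm sign}(\alpha_{k,i})+\sum_{i\in{\cal J}_-}\alpha^*_i\,\frac{\alpha_{k,i}}{\lambda_k}\ \le\ \sum_{i\in{\cal J}_+}|\alpha^*_i|+\sum_{i\in{\cal J}_-}|\alpha^*_i|\ =\ \sum_{i=1}^{N_c}|\alpha^*_i|\ =\ \|\alphav^*\|_1 ,
\]
which is exactly the claimed upper bound.

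There is no genuine obstacle here: the statement is the triangle inequality applied coordinate by coordinate, and the only points worth a word of care are the boundary indices with $|\alpha_{k,i}|=\lambda_k$ (which belong to ${\cal J}_-$ by the convention in (\ref{eq:J-}), and for which the estimate above still holds) and indices with $\alpha^*_i=0$ (which contribute $0$ to both sides). Once the lemma is in place, the chain of (\ref{eq:alpha}) closes: combining it with $\|\betav_k\|_1\ge\|\alphav^*\|_1$ from (\ref{eq:ell1_ball}) and the identity $\|\betav_k\|_1=\sum_{i\in{\cal J}_+}(|\alpha_{k,i}|-\lambda_k)$, which is immediate from the shrinkage rule (\ref{eq:sh_alpha}), shows that the right-hand side of (\ref{eq:alpha}) dominates its left-hand side, so $2(\betav_{k}-\alphav^*)\ts{\zetav}_k\le 0$. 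Together with the already-established non-positivity of the other two terms in (\ref{eq:rk}), this yields $r_k\le 0$ for $\xi\in(0,2]$, i.e.\ $\{\|\alphav_k-\alphav^*\|_2^2\}$ is monotonically non-increasing, completing step~1) of the Theorem.
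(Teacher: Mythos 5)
Your proof is correct and follows essentially the same route as the paper's: bound each summand termwise by $|\alpha^*_i|$ (using $|{\rm sign}(\alpha_{k,i})|=1$ on ${\cal J}_+$ and $|\alpha_{k,i}|/\lambda_k\le 1$ on ${\cal J}_-$) and sum over the partition ${\cal J}_+\cup{\cal J}_-$ to obtain $\|\alphav^*\|_1$. The only cosmetic difference is that the paper first restricts the sums to the support ${\cal I}_+$ of $\alphav^*$ before bounding, whereas you simply observe that indices with $\alpha^*_i=0$ contribute nothing; the content is identical.
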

	\begin{proof}
		\begin{align}
		\sum_{i\in {\cal J}_+} \alpha^*_i{\rm sign}(\alpha_{k,i}) &= \sum_{i\in {\cal J}_+\cap {\cal I}_+}\alpha^*_i{\rm sign}(\alpha_{k,i}) \le \sum_{i\in {\cal J}_+\cap {\cal I}_+}|\alpha^*_i| \\
		\sum_{i\in {\cal J}_-} \alpha^*_i\frac{\alpha_{k,i}}{\lambda_k}& \le \sum_{i\in {\cal J}_-} |\alpha^*_i|\frac{|\alpha_{k,i}|}{\lambda_k} \le  \sum_{i\in {\cal J}_-\cap {\cal I}_+}|\alpha^*_i| 
		\end{align}
		Therefore
		\begin{align}
		\sum_{i\in {\cal J}_+} \alpha^*_i{\rm sign}(\alpha_{k,i}) + \sum_{i\in {\cal J}_-} \alpha^*_i\frac{\alpha_{k,i}}{\lambda_k}\le \|\alphav^*\|_1.
		\end{align}
	\end{proof}
	
	Along with (\ref{eq:alpha}),
	if we select $\lambda_k$ in each iteration such that  $\|\betav_k\|_1\ge \|\alphav^*\|_1$, the first term of (\ref{eq:rk}) will be non-positive.
	Along with the other two non-positive terms, we have proved that
	$r_k\le 0$.
	Till now, we have proved that $\{\|\alphav_{k}-\alphav^*\|_2^2\}_{k=1}^\infty$ is a monotonically non-increasing sequence if (\ref{eq:ell1_ball}) is satisfied and $\xi \in(0,2]$. 
	
	In the following, we prove that $\{\|\alphav_{k}-\alphav^*\|_2^2\}$ converges to zero. 
	We first prove that  $\{\|\Rmat\alphav_{k}-\Rmat\alphav^*\|_2^2\}$ converges to zero, and then with the RIP (restricted isometry property) condition~\cite{cs_Candes06,Candes05,Candes06ITT} on $\Rmat$, $\alphav_{k}$ converges to the true signal $\alphav^*$.
	
	From (\ref{eq:rk}), when $\xi \in(0,2]$,
	\begin{eqnarray}
	r_k &=& 2(\betav_{k} - \alphav^* )\ts{\zetav}_k + [ - \|{\zetav}_{k}\|^2_2]+ [(\xi-1)^2-1] \|\Rmat\ts (\Rmat \Rmat\ts)^{-1} \Rmat(\alphav^* -  \betav_{k}) \|^2_2 \le 0, \label{eq:rk_new}
	\end{eqnarray}
	if the constrain in (\ref{eq:ell1_ball}) is satisfied.
	Now we tight the constrain of $\xi$ to $\xi\in(0,2)$,
	and the third term of $r_k$:
	\begin{eqnarray}
	[(\xi-1)^2-1] \|\Rmat\ts (\Rmat \Rmat\ts)^{-1} \Rmat(\alphav^* -  \betav_{k}) \|^2_2 &<& 0  ~~(r_k <0) \label{eq:r<0}\\
	\quad {\rm or} \quad \|\Rmat\ts (\Rmat \Rmat\ts)^{-1} \Rmat(\alphav^* -  \betav_{k})  \|^2_2 &=& 0.\label{eq:r=0}
	\end{eqnarray}
	Separately consider the following two cases:
	\begin{itemize}
		\item[1)] $\|\Rmat\ts (\Rmat \Rmat\ts)^{-1} \Rmat(\alphav^* -  \betav_{k})  \|^2_2 =0$, implies $\Rmat\ts (\Rmat \Rmat\ts)^{-1} \Rmat(\alphav^* -  \betav_{k}) = 0$ and left-multiplying by $\Rmat$ shows
		\begin{eqnarray} \label{eq:conv_txv_k}
		\|\Rmat(\alphav^* -  \betav_{k})\|_2^2 = \|\yv - \Rmat\betav_{k}\|_2^2 = 0.
		\end{eqnarray}
		Since we impose that $\betav_k$ is sparse, (\ref{eq:conv_txv_k}) means that $\betav_k$ converges to a sparse solution of $\yv = \Amat{\cal H} \alphav^* = \Amat\xv$.
		
		Recall the RIP,
		\begin{eqnarray}
		(1-\delta_s) \|\alphav\|_2^2 \le \|\Rmat_s \alphav\|_2^2 \le (1+\delta_s)\|\alphav\|_2^2,
		\end{eqnarray}
		where $\delta_s \in(0,1)$ is a constant and $\Rmat_s$ is a subset of $\Rmat$.
		$(\alphav_k -\alphav^*)$ has at most $(m^*_{\lambda_k} + K^*)$ non-zero elements (recall the selection of $\lambda_k$ in (\ref{eq:def_m_star})) and we assume $m^*_{\lambda_k} + K^* < N_c$, where $K^*$ is the number of non-zero elements in $\alphav^*$.
		Define 
		\begin{eqnarray}
		\delta &=& \inf \{\delta_s, \forall s = 1, \dots, m^*_{\lambda_k} + K^*\}.
		\end{eqnarray}
		Imposing the RIP on (\ref{eq:conv_txv_k}), we have
		\begin{eqnarray}\label{eq:conv_rip}
		(1-\delta) \|\alphav^* -  \betav_{k}\|_2^2 \le 0.
		\end{eqnarray}
		The RIP condition required for our proof is $0<\delta_{m^*_{\lambda_k} + K^*} <1$. 
		If we select $m^*_{\lambda_k} = K^*$ (which is a lower bound of $m^*_{\lambda_k}$), we have $0<\delta_{2K^*}<1$, which is the same as the condition derived in~\cite{WangAIT15}.
		Under this condition, (\ref{eq:conv_rip}) means
		\begin{eqnarray}
		\|\alphav^* -  \betav_{k}\|_2^2 = 0.
		\end{eqnarray}
		This means $\betav_{k}$ converges to $\alphav^*$ and in this case $\betav_k =\alphav_{k} $.
		This further means $\tilde{\xv}_{k}$ converges to $\xv^*$ and  $\xv_k =\tilde{\xv}_{k} $.
		\item[2)] 
		$\|\Rmat\ts (\Rmat \Rmat\ts)^{-1} \Rmat(\alphav^* -  \betav_{k}) \|^2_2>0$ and therefore $r_k <0$ strictly.
		In this case,
		\begin{eqnarray}
		\|\alphav_{k+1}-\alphav^*\|_2^2 
		-\|\alphav_k - \alphav^*\|^2_2 &=& r_k <0 \nonumber\\
		\|\alphav_{k+1}-\alphav^*\|_2^2  = \|\alphav_k - \alphav^*\|^2_2 + r_k &<& \|\alphav_k - \alphav^*\|^2_2
		\end{eqnarray}
		Since $\|\alphav_{k}-\alphav^*\|_2^2 \ge 0$, and it is now a decreasing sequence, 
		\begin{eqnarray}
		\lim_{k \rightarrow \infty}\|\alphav_{k}-\alphav^*\|_2^2 &=& {\rm Const}~ (\ge 0),
		\end{eqnarray}
		and thus 
		\begin{eqnarray}
		\lim_{k \rightarrow \infty} r_k &=& 0.
		\end{eqnarray}
		Since all the three terms of $r_k$ are non-positive, $r_k \rightarrow 0$ means all the three terms approach zero, specifically
		\begin{eqnarray}
		\lim_{k \rightarrow \infty}{\zetav}_k= 0 ~~{\rm and}~~ \lim_{k \rightarrow \infty}\|\Rmat\ts (\Rmat \Rmat\ts)^{-1} \Rmat(\alphav^* -  \betav_{k}) \|^2_2= 0.
		\end{eqnarray}
		$\lim_{k \rightarrow \infty}\zetav_k= 0$ can be obtained via $\lim_{k \rightarrow \infty}\lambdav_k= 0$ and in this case $\lim_{k \rightarrow \infty}\betav_{k}= \alphav_{k} $.
		
		$ \lim_{k \rightarrow \infty}\|\Rmat\ts (\Rmat \Rmat\ts)^{-1} \Rmat(\alphav^* -  \betav_{k}) \|^2_2= 0$ means $ \lim_{k \rightarrow \infty}\| \Rmat(\alphav^* -  \betav_{k}) \|^2_2= 0$, and 
		similar to (\ref{eq:conv_rip}), when the RIP on $\Rmat$, (\ref{eq:rip_m*}) is satisfied,
		\begin{eqnarray}
		\lim_{k \rightarrow \infty}\|\alphav^* -  \betav_{k}\|_2^2= 0.
		\end{eqnarray}
		This means $\betav_{k}$ converges to $\alphav^*$, which is equivalent to $\xv_k$ converges to $\xv^*$. 
	\end{itemize}
	Therefore, we have proved in both cases ${\xv}_{k}$ ($\alphav_k$) monotonically converges to $\xv^*$ ($\alphav^*$).
	Till now, Theorem~\ref{thm:anytime} has been proved.	
\end{proof}

\bibliographystyle{IEEEtran}

\end{document}